\definecolor{green_hl}{rgb}{0.0, 0.56, 0.0}
\newcommand{\BibTeX}{B\kern-.05em{\sc i\kern-.025em b}\kern-.08em\TeX}
\newcommand{\LTL}{{\sc ltl}\xspace}
\newcommand{\LTLf}{{\sc ltl}$_f$\xspace}
\newcommand{\LDLf}{{\sc ldl}$_f$\xspace}
\newcommand{\QLTLf}{{\sc ltl}$_f$\ensuremath{[\mathcal{F}]}\xspace}
\newcommand{\QATLf}{{\sc atl}$_f$\ensuremath{[\mathcal{F}]}\xspace}
\newcommand{\true}{\ensuremath{\top}}  
\newcommand{\false}{\ensuremath{\bot}} 
\newcommand{\R}{\mathcal{R}}
\newcommand{\Until}{\ensuremath{\mathcal{U}}\xspace} 
\newcommand{\Next}{X}
\newcommand{\Always}{G}
\newcommand{\Eventually}{F}
\newcommand{\Release}{\ensuremath{\mathop{\R}}}
\newcommand{\eqdef}{\stackrel{def}{=}}
\renewcommand{\iff}{\ensuremath{\leftrightarrow}}
\newcommand{\StatesSet}{\ensuremath{Q}}
\newcommand{\InitState}{\ensuremath{q_0}}
\newcommand{\RegisterSet}{\ensuremath{\mathcal{V}}}
\newcommand{\TransitionFn}{\ensuremath{\delta_q}}
\newcommand{\RegisterFn}{\ensuremath{\delta_v}}
\newcommand{\RewardFn}{\ensuremath{\delta_r}}
\newcommand{\ExtendedMDP}{\ensuremath{\mathcal{M} \otimes \mathcal{A}}}
\newcommand{\registerAtI}{\ensuremath{\mathrm{val}_{\mathcal{A}_\phi}(\,\varphi,\, i\,)}\xspace}
\newcommand{\traceAtI}{\ensuremath{\bigl[\![\varphi,i]\!\bigr](\lambda)}\xspace}
\newcommand{\eveAlwTrue}{\ensuremath{\Eventually \Always \, \top }\xspace}
\newcommand{\gym}{\textsc{gym}\xspace}
\newtheorem{theorem}{Theorem}
\newtheorem{lemma}[theorem]{Lemma}
\newtheorem{definition}{Definition}
\newtheorem{example}{Example}
\newtcolorbox{monitorBox}[3][]{
  enhanced,                    
  equal height group=#1,       
  title=#2,                    
  colbacktitle=#3!40,          
  coltitle=black,              
  colframe=#3!75!black,        
  colback=gray!10,             
  fonttitle=\bfseries,         
  center title,
  title filled,                
  arc=3mm,
  boxsep=5pt
}
\title{Expressive Temporal Specifications for Reward Monitoring}
\author {
    Omar Adalat\textsuperscript{\rm 1},
    Francesco Belardinelli\textsuperscript{\rm 1}
}
\newif\ifwithappendix
\begin{document}

\maketitle

\begin{abstract}
Specifying informative and dense reward functions remains a pivotal challenge in Reinforcement Learning, as it directly affects the efficiency of agent training. In this work, we harness the expressive power of quantitative Linear Temporal Logic on finite traces (\QLTLf) to synthesize \emph{reward monitors} that generate a dense stream of rewards for \emph{runtime}-observable state trajectories. By providing nuanced feedback during training, these monitors guide agents toward optimal behaviour and help mitigate the well-known issue of \emph{sparse rewards} under long-horizon decision-making, which arises under the Boolean semantics dominating the current literature. Our framework is algorithm-agnostic and only relies on a state labelling function, and naturally accommodates specifying non-Markovian properties. Empirical results show that our quantitative monitors consistently subsume and, depending on the environment, outperform Boolean monitors in maximising a quantitative measure of task completion and in reducing convergence time.
\end{abstract}


\begin{links}
    \link{Code}{https://github.com/nightly/quantitative-reward-monitoring}
\end{links}

\section{Introduction}
\emph{Reinforcement Learning} (RL) offers a powerful framework for sequential decision-making under uncertainty, leveraging sampling-based methods to iteratively refine policies to optimality \cite{sutton1998reinforcement}. Central to the success of any RL algorithm is the careful design of its reward function
\cite{eschmann2021reward}.
Ill-defined reward functions might lead to unintended behaviours \cite{leike2017ai}, unforeseen consequences \cite{denison2024sycophancy}, and catastrophic failures \cite{festor2022assuring}. As such, the specification of reward functions is key to the development of safe and robust RL, including the AI alignment problem \cite{christian2021alignment}.

Within this landscape, temporal logics \cite{BaierKatoen} stand out as an elegant alternative to conventional reward specification approaches.
By offering a formal, high-level language for expressing complex objectives and constraints, temporal logics enable more structured reward design and facilitate clearer reasoning about an agent's goals and behaviour.

\begingroup
\renewcommand\thefootnote{}%
\footnotetext{*~This paper is the extended version (with appendix) of the paper in the Proceedings of the 40th AAAI Conference on Artificial Intelligence (AAAI-26).}%
\addtocounter{footnote}{-1}%
\endgroup

Consequently, temporal logic-based reward definitions can reduce the burden of manual engineering, while improving interpretability and maintainability, thus paving the way for more robust RL systems \cite{li2019formal, hasanbeig2020deep}.

In the current literature, logical specifications in RL (including reward specifications) have mainly settled on qualitative, Boolean semantics, where a formula is evaluated as true or false \cite{sadigh2014learning}. However, Boolean semantics inherently leads to \emph{sparse rewards}, as feedback is often provided only upon 
complete satisfaction of the relevant formula. Sparse rewards pose significant challenges to RL algorithms, as they hinder effective credit assignment and increase sample complexity \cite{andrychowicz2017hindsight}, limiting performance in long-horizon tasks.

To mitigate these issues, we investigate the use of quantitative Linear Temporal Logic on finite traces (denoted as \QLTLf) to specify and synthesize {\em reward monitors} for providing richer feedback. Quantitative semantics enable assessing the degree to which a given formula is satisfied besides being just true or false.

\paragraph{Contributions.}
In this paper we provide a procedure to automatically construct \emph{quantitative reward monitors} starting with rewards specified as formulas in \QLTLf.
Such monitors provide a reward in real-time by consuming a trace of states. The agent is ascribed a corresponding reward based on progress, using real-valued evaluation of a formula in \emph{quantitative linear temporal logic interpreted on finite traces}.
We evaluated the proposed approach empirically, investigating the efficacy of the approach in terms of convergence speed and a domain-specific task completion measure, against Boolean monitors and handcrafted reward functions. 
We demonstrate superior performance compared to Boolean monitors (in settings where quantitative information can be exploited) and, in some cases, manually-specified reward functions. Furthermore, we show how a quantitative reward monitor can be constructed with linear overhead in the number of transitions and states w.r.t.~the formula size, and how non-Markovian properties are captured through temporally extended goals. Since quantitative semantics provide denser rewards, this leads to greater sample efficiency, ability to handle long-horizon tasks, and avoid sparse rewards.

\paragraph{Scheme of the paper.} 
Section~\ref{sec:background} provides the prerequisite knowledge and foundation on reward monitors. Section~\ref{section:qrm} introduces quantitative reward monitors and their construction algorithm. Section~\ref{section:experimental_eval} provides our empirical results with methodology and discussion. Section~\ref{sec:related_work} discusses related work; and finally, Section~\ref{sec:conclusion} provides our conclusions and future work.

\section{Background}
\label{sec:background}

In this section we introduce the background required to build quantitative monitors from \QLTLf specifications. 

\textbf{Reinforcement Learning} (RL) environments are typically modelled as \emph{Markov decision processes} (MDPs), defined as tuples $\mathcal{M} = (\mathcal{S}, A, \Pr, \mathcal{R}, \gamma)$,
where $\mathcal{S}$ is a  set of {\em states}; $A$ is a set of {\em actions}; $\Pr: \mathcal{S} \times A \to \Delta(\mathcal{S})$ is the {\em transition kernel}, with $\Delta(\mathcal{S})$ being the set of probability distributions over $\mathcal{S}$, $\mathcal{R} : \mathcal{S} \times A \times \mathcal{S} \to \mathbb{R}$ is the (Markovian) {\em  reward function},  and $\gamma \in [0,1]$ is the {\em discount factor}. 

In an MDP, a (stochastic) {\em policy} is a function $\pi : \mathcal{S} \to \Delta(A)$ and  its performance is measured by the expected discounted return
$
V^{\pi}(s)\;=\;\mathbb{E}_{\pi}\!\Bigl[\;\sum_{t=0}^{\infty}\gamma^{t}\mathcal{R}_{t+1}\;\Bigm|\;\mathcal{S}_{0}=s\Bigr],
$
where the expectation is over the trajectory generated by following policy $\pi$ from the initial state $\mathcal{S}_{0}=s$.  
The goal of RL is to find an optimal policy $\pi^{\star}$ satisfying
$\pi^{\star}\in\arg\max_{\pi}V^{\pi}(s)$ for every $s\in\mathcal{S}$. In this setting, the \textit{Markov property} ensures that the next state and reward depend only on the current state and action.  A \emph{labelled Markov decision process} extends an MDP by adding a labelling function $\mathcal{L}: \mathcal{S} \to [0,1]^{\mathcal{P}}$ from the set $\mathcal{S}$ of states to the set $\mathcal{P}$ of atoms.

\paragraph{Quantitative Linear Temporal Logic on finite traces.}
Linear Temporal Logic on finite traces (\LTLf) shares the syntax with Linear Temporal Logic (\LTL), but has different semantics, due to being interpreted on traces of finite length \cite{de2013linear}. Naturally, episodes in RL have finite length, which is suitable for specifications interpreted on finite traces. Quantitative \LTLf (\QLTLf) shares the same syntax as \LTLf, but instead of having a two-valued semantics, computations are assigned a real-valued scalar within the range $[0, 1]$, thus extending Zadeh's fuzzy propositional logic with linear temporal connectives \cite{lamine2000using,faella2008model}.

\begin{definition}[\QLTLf syntax]
    Given a set $\mathcal{P}$ of atoms, the syntax of \QLTLf is defined as follows, where $p\in \mathcal{P}$:
$$
\varphi::= 
\top \mid 
p \mid \neg \varphi\mid \varphi \wedge \varphi \mid 
\Next  \varphi \mid
\varphi \: \Until \: \varphi \mid \varphi \Release \varphi
$$
where $\top$ denotes true and $\bot \eqdef \lnot \top$ denotes false.
\end{definition}

Here $\Next$ (Next),  $\Until$ (Until), and $\Release$ (Release) are the standard \LTL operators.
The Boolean connectives, $\lor$ (or), $\to$ (implies), and $\leftrightarrow$ (iff) can be introduced by their standard abbreviations.
The temporal operators $\Eventually$ (eventually) and $\Always$ (always) can be derived as $\Eventually \varphi \eqdef \top \: \Until \: \varphi$ and $\Always \varphi \eqdef \false \Release \varphi$. 

We also consider a fragment of \LTLf, termed \emph{safe-\LTLf}, to syntactically identify whether a given specification is purely a safety objective \cite{sistla1994safety}.
\begin{definition}[Safe-\LTLf] \label{def:safe_ltlf}
A safe-\LTLf formula $\varphi$ is an until-free formula in negation normal form, such that its syntax comprises:
\begin{eqnarray*}
  \varphi & ::= & 
    \top \mid
    p \mid \lnot p \mid
    \varphi_1 \land \varphi_2 \mid
    \varphi_1 \lor \varphi_2 \mid
    \Next \varphi \mid
    \varphi_1 \Release \varphi_2
\end{eqnarray*}
\end{definition}
The semantics for \QLTLf is provided as follows, where $\lambda$ denotes a finite trace of states, and $\lambda_i$ represents the $i^{\text {th }}$ state in the trace.
%
\begin{definition}[Semantics of \QLTLf] \label{def:semantics}
  The evaluation $[\![\varphi, i]\!](\lambda) \in [0,1]$ of formula $\varphi$ on the finite trace $\lambda$ 
  is defined inductively as follows: 
$$
  \begin{aligned}
  [\![\true, i]\!](\lambda) &= 1\\
  [\![p, i]\!](\lambda) &= \mathcal{L}(\lambda_{i})(p) \\
  [\![\lnot \varphi, i]\!](\lambda) &= 1 - [\![\varphi, i]\!](\lambda)\\
  [\![\varphi_{1}\wedge \varphi_{2}, i]\!](\lambda) &= 
     \min\bigl([\![\varphi_{1}, i]\!](\lambda),\,[\![\varphi_{2}, i]\!](\lambda)\bigr)\\
  %
 [\![\Next \varphi, i]\!](\lambda)
 &=
 \begin{cases}
   [\![\varphi, i+1]\!](\lambda), & \text{if } i < |\lambda|\\
   0, & \text{otherwise},
 \end{cases}\\[6pt]
  [\![\varphi_{1} \: \Until \: \varphi_{2}, i]\!](\lambda)
  &=
  \max_{\,i \le j \le |\lambda|}
    \min_{i \le k < j} \Bigl(
      [\![\varphi_{2}, j]\!](\lambda), [\![\varphi_{1}, k]\!](\lambda)
    \Bigr) \\
    [\![\varphi_{1}\, \Release \,\varphi_{2}, i]\!](\lambda)
    &=
    \min_{i \le j \le |\lambda|} 
      \max_{i \le k < j} \Bigl( [\![\varphi_{2}, j]\!](\lambda),\, [\![\varphi_{1}, k]\!](\lambda) \Bigr)
     \\[6pt]
  \end{aligned} 
$$
\end{definition}


The truth value for derived logical connectives and temporal operators can be obtained from Def.~\ref{def:semantics}.

\begin{figure*}[h]
    \centering
    \begin{subfigure}[b]{0.48\textwidth}
        \centering
        \definecolor{lightpurple}{rgb}{0.8, 0.8, 1}
\definecolor{darkpurple}{rgb}{0.6, 0.6, 0.9}

\tikzset{
    mynode/.style={circle, draw=darkpurple, fill=lightpurple, thick, minimum size=24pt},
    myedge/.style={->, >=stealth', thick},
    mylabel/.style={midway},
    initial glow/.style={
        mynode,
        preaction={
          draw=darkpurple,
          line width=3pt,
          opacity=.50
        }
   },
}

\begin{tikzpicture}[auto]

  \node[mynode, initial glow] (0) at (0,2) {0};
  \node[mynode] (1) at (-2,0) {1};
  \node[mynode, double=darkpurple, double distance=1pt] (2) at (2,0) {2};


  \draw[myedge] (0) edge[loop above] node[mylabel, above] {$\lnot a$} ();
  
  \draw[myedge] (0) -- node[mylabel, above left] {$a \land \lnot b$} (1);
  \draw[myedge] (0) -- node[mylabel, above right] {$a \land b$} (2);
  
  \draw[myedge] (1) -- node[mylabel, below] {$b$} (2);

  \draw[myedge] (1) edge[loop left] node[mylabel, left] {$\lnot b$} ();
  \draw[myedge] (2) edge[loop right] node[mylabel, right] {$\true$} ();

\end{tikzpicture}
        \caption{Boolean reward monitor (Moore machine) for the \LTLf formula $\lnot a \: \Until \: (a \land \Eventually b)$, where state 0 is the initial state, and state 2 can be viewed as the accepting state which outputs a reward.}
    \end{subfigure}\hfill
    \begin{subfigure}[b]{0.48\textwidth}
        \centering
        \definecolor{lightpurple}{rgb}{0.8, 0.8, 1}
\definecolor{darkpurple}{rgb}{0.6, 0.6, 0.9}

\tikzset{
    mynode/.style={circle, draw=darkpurple, fill=lightpurple, thick, minimum size=24pt},
    myedge/.style={->, >=stealth', thick},
    mylabel/.style={midway},
    initial glow/.style={
        mynode,
        preaction={
          draw=darkpurple,
          line width=3pt,
          opacity=.50
        }
   },
}

\begin{tikzpicture}[auto]

    \node[mynode, initial glow] (0) at (-2, 2) {0};
    \node[mynode] (1) at (0, 2) {1};
    \node[mynode] (2) at (2, 2) {2};

    \draw[myedge] (0) -- (1);
    \draw[myedge, bend left=18]  (1) to (2);
    \draw[myedge, bend left=18]  (2) to (1);

    \node[draw=darkpurple, fill=lightpurple!50, thick,
          rounded corners, align=left, inner sep=0pt,
          text width=1\linewidth, font=\small] (regBox2)
          at (0,4) {
            {\centering \qquad \qquad \: \: \textbf{Register updates ($\delta_v$) in State 2:}} \\
        
        $a\gets\mathcal{L}_a(s')$;
        $\lnot a \gets 1-\RegisterSet(a);\;$
        $\min(\lnot a)\gets\min\{\min(\lnot a),\lnot a\};\;$\\
          
        $b\gets\mathcal{L}_b(s');\;$
        $\max(b)\gets\max\{\Eventually(b),b\}$\;\\
        
        $\max(a\land\max(b))\gets
            \max\{\max(a\land\max(b)),a\land\max(b)\}$\\


        \hspace{4cm} $\cdots$
    };

    \draw[->, thick, darkpurple] (regBox2) -- (2);

\end{tikzpicture}
        \caption{Quantitative reward monitor for the \QLTLf formula $\lnot a \: \Until \: (a \land \Eventually b)$, where state 0 is the distinguished initial state. The register update function is partially displayed only for state 2 for brevity. }
    \end{subfigure}
\vspace{0.3cm}
\caption{A boolean monitor with \LTLf semantics (left-hand side) and quantitative monitor with \QLTLf semantics (right-hand side) constructed for the same formula: $\lnot a \: \Until \: (a \land \Eventually b)$.}
\label{fig:bool_and_quant_monitors}
\end{figure*}

\textbf{Reward monitors} are a type of transducer, which consumes a trace $\tau \eqdef \bigl(s_1, s_2, \ldots, s_n \bigr)$ of states and provides as output a sequence $r = ( r_1, r_2, \ldots, r_n )$ of scalar rewards, one for each time step. We define a Boolean reward monitor as a specific type of transducer: 
\begin{definition}[BRM]
   A {\em Boolean reward monitor} is 
   a Moore machine, 
   defined as a tuple $\mathcal{B} = (Q, q_0, \Sigma, \Gamma, \delta, \theta)$, where $Q$ is the set of the states, with initial state $q_0$. $\Sigma$ is the input alphabet, $\Gamma$ is the output alphabet, 
   $\delta : Q \times \Sigma \to Q$ is the transition function, $\theta: Q \to \mathbb{R}$ is the output (reward) function.
\end{definition}

Construction of a reward monitor is given by a set of specification-reward pairs, which are defined as follows:
\begin{definition}[Specification-reward pairs]
    A specification-reward pair is a tuple $(\varphi, \rho)$, where $\varphi$ is an \LTLf or \QLTLf formula,  and $\rho$ 
    is the associated scalar weight.
\end{definition}

In order to construct a Boolean reward monitor from an \LTLf formula, one can transform the formula into a deterministic finite automaton (DFA) \cite{de2013linear}, and then use it as a reward machine returning a reward of 0 on all states, except for accepting states outputting a reward of 1 (which can be multiplied by scalar $\rho$).
\section{Quantitative Reward Monitors}
\label{section:qrm}
In this section we provide the definition of quantitative reward monitors, as well as the procedure to build them from \QLTLf specifications.

\subsection{Monitor Definition}

A {\em quantitative reward monitor} is a 
\emph{finite state machine with registers}, defined as follows: 
\begin{definition}[QRM]
    A {\em Quantitative Reward Monitor} is defined as a tuple $\mathcal{A} = (\StatesSet, q_0, \TransitionFn, \RegisterSet, \RegisterFn, \RewardFn)$, where $\StatesSet$ is the set of states, with $q_0 \in \StatesSet$ the initial state, a set $\mathcal{V} \subseteq T \times \mathbb{R}$ of register-value pairs  (we write $\mathcal{V}(t)$ to retrieve the stored register value $t$) where $T$ is the set of registers, $\delta_q: \StatesSet \to \StatesSet$ is the transition function, $\delta_v: \StatesSet \times T \to \RegisterSet$ is the register update function, and $\delta_r = t_{reward} \times \rho$ is the reward function, where $t_{reward}$ is the reward register and $\rho$ is the scalar in the reward specification pair.
\end{definition}

The registers update throughout the trace, and track aspects relevant to formula evaluation, for example the minimum and maximum of a subformula as required by the semantics of temporal operators $\Eventually$ and $\Always$ respectively. Intuitively, the reward register $t_{reward}$ denotes the degree of satisfaction of a given  specification's formula, hence providing quantitative feedback. In Figure \ref{fig:bool_and_quant_monitors} we demonstrate how both Boolean and quantitative monitors would be constructed for the same formula $\lnot a \: \Until \: (a \land \Eventually b)$.

\subsection{Safety Specifications}
Reward monitors (both Boolean and quantitative) focus on two principal classes of objectives, namely \emph{reachability} and \emph{safety} objectives.  From $\varphi$ in a reward-specification pair, we can syntactically infer whether the specification comprises a safety objective 
(according to 
Def.~\ref{def:safe_ltlf}), where we denote a safety formula (syntactically inferred) by $\varphi_{safety}$.

A \emph{safety specification} requires that once $\varphi_{safety}$ is violated at any time step, the monitor emits a fixed penalty for the remainder of the trace; that is, rewards are clamped to $\zeta \in \mathbb{R}$ with $\zeta \le 0$ (typically $\zeta=0$ or a negative constant). This covers both terminating failures, where the episode ends immediately, as well as non-terminating hazards, where the agent should not be able to recover to a positive return after an unsafe act (e.g. a self-driving car running a red light).  Formally, if a safety formula $\varphi_{\text{safety}}$ is violated at index $i$,   then $\delta_r$ outputs $\zeta$ for all subsequent time indices $t\ge i$. Temporal credit assignment enables propagating the penalty back to the unsafe action.

\subsection{Monitor Synthesis}
\label{sec:rm_construction}

Function $\textsc{construct}$ for constructing a reward monitor $\mathcal{A}_{\varphi}$ is defined inductively on the \QLTLf formula $\varphi$,  with the support of the memoization function $\textsc{synth}$. As many sub-monitors are created recursively, storing them becomes useful to avoid repetitive computation, which is the purpose of Algorithm \ref{algo:synth} \textsc{synth}. 

Let $\mathcal{L}_p(s)$ denote the value of the atomic proposition $p$ in state $s$, where $s \in \mathcal{S}$ of the labelled MDP, the function $\textsc{construct}$ works as follows:
\begin{itemize}
   \item For $\varphi = \true$, the monitor is assigned states $\StatesSet \gets \{q_0\}$, set of register-value pairs $\mathcal{V} \gets \bigl\{ (t_{\top}, 1) \bigr\}$, transition function $\TransitionFn \gets \bigl\{(q_0, q_0) \bigr\}$, register update function $\delta_v \gets \emptyset$, and reward function $\delta_r \gets \mathcal{V}(t_\top)$.
    
    \item For $\varphi = p$ for $p \in \mathcal{P}$, the monitor is assigned states $\StatesSet \gets \{q_0, q_1\}$, set of register-value pairs $\mathcal{V} \gets \bigl\{(t_p, 0)\bigr\}$, transition function $\TransitionFn \gets \bigl\{ (q_0, q_1), (q_1, q_1) \bigr\}$, register update function $\RegisterFn(q_0, t_p) \gets \mathcal{L}_p(s)$, and reward function $\delta_r \gets \mathcal{V}(t_p)$. 
    
    \item For negation $\varphi = \lnot \psi$, let us create $\mathcal{A}^{\psi}$ from procedure $\textsc{Synth}$, where $\StatesSet \gets Q^{\psi}$, $q_0 \gets q_0^{\psi}$, $\mathcal{V} \gets \mathcal{V}^{\psi} \cup \bigl\{ (t_{\lnot \psi}, 1 - \mathcal{V}^{\psi}(t_{\psi})) \bigr\}$, $\TransitionFn = \TransitionFn^{\psi}$, $\RegisterFn \gets \delta_v^{\psi}$, and then $\delta_r \gets \mathcal{V}(t_{\lnot \psi})$. Finally, for all states $q \in \StatesSet$, assign $\delta_v(q, t_{\lnot \psi}) = 1 - \mathcal{V}(t_{\psi})$.
    
    \item For conjunction $\varphi = \varphi_1 \land \varphi_2$, let us create $\mathcal{A}^{\varphi_1}$ and $\mathcal{A}^{\varphi_2}$ from procedure $\textsc{Synth}$. Then we can take the Cartesian product of the two monitors, such that $\StatesSet = \StatesSet^{\varphi_1} \times \StatesSet^{\varphi_2} = \{(q^{\varphi_1}, q^{\varphi_2}) \mid q^{\varphi_1} \in \StatesSet^{\varphi_1},\, q^{\varphi_2} \in \StatesSet^{\varphi_2}\}$, $q_0 \gets (q_0^{\varphi_1}, q_0^{\varphi_2})$, and $\mathcal{V} \gets \mathcal{V}^{\varphi_1} \cup \mathcal{V}^{\varphi_2} \cup \{ (t_{\varphi_1 \land \varphi_2}, \min(\mathcal{V}(t_{\varphi_1}), \mathcal{V}(t_{\varphi_2})))\}$, and $\TransitionFn: \StatesSet^{\varphi_1} \times \StatesSet^{\varphi_2} \to \StatesSet^{\varphi_1} \times \StatesSet^{\varphi_2}$. Then, for all states $q \in \StatesSet$, we can assign $\delta_v \gets \delta_v^{\varphi_1} \cup \delta_v^{\varphi_2}$ and $\delta_v(q, t_{\varphi_1 \land \varphi_2}) \gets \min(\mathcal{V}(t_{\varphi_1}), \mathcal{V}(t_{\varphi_2}))$. Finally, let $\delta_r \gets \mathcal{V}(t_{\varphi_1 \land \varphi_2})$.

    \item For temporal next $\varphi = \Next \psi$, the procedure is identical to an atomic proposition $p$, but the labelling of $s'$ (as a result of taking action $a$ in state $s$) is considered instead of $s$.
    
    \item For temporal until $\varphi = \varphi_1 \: \Until \: \varphi_2$, let us start by obtaining $\mathcal{A}^{\varphi_1}$ and $\mathcal{A}^{\varphi_2}$ from procedure $\textsc{Synth}$. Let $q_{-1}$ and $q_{-2}$ denote the final state of a monitor and the penultimate state of a monitor respectively. We form a new monitor $\mathcal{A}^{\varphi}$ as follows, comprising a state space of $\StatesSet = \StatesSet^{\varphi_1} \times \StatesSet^{\varphi_2} = \{(q^{\varphi_1}, q^{\varphi_2}) \mid q^{\varphi_1} \in \StatesSet^{\varphi_1},\, q^{\varphi_2} \in \StatesSet^{\varphi_2}\}$, and $q_0 \;\gets\; (q_0^{\varphi_1}, \; q_0^{\varphi_2})$. Then, we can state
    $\mathcal{V} \gets \mathcal{V}^{\varphi_1} \: \cup \: \mathcal{V}^{\varphi_2} \cup \bigl\{ (t_{\min \varphi_1}, 1)\bigr\} \cup \bigl\{(t_{\max \varphi_2}, 0) \bigr\} \cup \bigl\{ (t_{\varphi_1 \: \Until \: \varphi_2}, 0) \bigr\}$. Then, for all states $q \in \StatesSet$, we can assign $\delta_v(q, t_{\min \varphi_1}) \gets \min(\mathcal{V}(t_{\min \varphi_1}), \mathcal{V}(t_{\varphi_1}))$, $\delta_v(q, t_{\max \varphi_2}) \gets \max(\mathcal{V}(t_{\max \varphi_2}), \mathcal{V}(t_{\varphi_2}))$, and $\delta_v(q, t_{\varphi_1 \: \Until \: \varphi_2}) \gets \max(\mathcal{V}(t_{\varphi_1 \: \Until \: \varphi_2}), \min(\mathcal{V}(t_{\min \varphi_1}), \mathcal{V}(t_{\max \varphi_2})))$.      To support continuous updates, we assign $\TransitionFn(q_{-1}) \gets q_{-2}$, $\RegisterFn(q_{-2}, t) \gets \RegisterFn(q_{-1}, t) \: \forall t$, and all contained atomic registers are renamed (to avoid name clashing), and at initialisation are filled with the initial state labels $\mathcal{L}_p(\mathcal{S}_0)$, updatable with $\delta_v(q_0, t_p) \to \mathcal{L}_p(s')$. Finally, we can assign $\RewardFn \gets \mathcal{V}(t_{\varphi_1 \: \Until \: \varphi_2})$.

    \item For temporal release $\varphi = \varphi_1 \Release \varphi_2$, the same procedure as until $\Until$ can be followed, substituting any instances of $\min$ with $\max$ and vice versa.
\end{itemize}
Monitors for temporal operators $\Eventually$ (Eventually) and $\Always$  (Always) can also be constructed primitively, with fewer registers as an implementation optimisation, only needing to track the max and min resp. of the associated subformula.
Further details of optimisations and abbreviated connectives are provided in Appendix~\ref{sec:appendix_quantitative_construction}.
\begin{theorem}
  \label{theorem:linear}
  The state and transition overhead of quantitative reward monitor construction from a \QLTLf formula $\varphi$ is linear with respect to the size of $\varphi$.
\end{theorem}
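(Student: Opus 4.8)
The plan is to proceed by structural induction on the \QLTLf formula $\varphi$, proving the slightly stronger invariant that the constructed monitor $\mathcal{A}_{\varphi}$ has a number of \emph{reachable} states and transitions (equivalently, the states the construction actually needs to materialise) bounded by $c\cdot|\varphi|$ for a fixed constant $c$. The enabling observation is that the transition function $\delta_q:\StatesSet\to\StatesSet$ is \emph{input-independent}: it maps a state to its successor without reading the current input symbol, since all data-dependence is confined to the register-update function $\delta_v$. Consequently, starting from $q_0$, every monitor traverses a single deterministic orbit $q_0,\delta_q(q_0),\delta_q^2(q_0),\dots$, which, $\StatesSet$ being finite, is a \emph{lasso}: a transient path of length $t$ leading into a cycle of length $c$. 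I would therefore carry through the induction a refined bound tracking the pair $(t,c)$, together with the auxiliary invariant $c\le 2$, since only these quantities govern the size of the reachable product in the binary cases.

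For the base cases, $\top$ yields one self-looping state and $p$ yields the two-state chain $q_0\to q_1\to q_1$, both of constant size and of cycle length $1$. The unary cases are immediate: negation $\lnot\psi$ reuses $\StatesSet^{\psi}$ and $\delta_q^{\psi}$ verbatim, introducing only a new register, so it adds \emph{no} states and preserves the lasso parameters, while the $\Next$ case adds only a constant number of states on top of its subformula. These give additive recurrences $n(\lnot\psi)=n(\psi)$ and $n(\Next\psi)\le n(\psi)+O(1)$, where $n(\cdot)$ denotes the reachable-state count.

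The crux — and the step I expect to be the main obstacle — lies in the binary operators $\land$, $\Until$, and $\Release$, which are defined via the \emph{Cartesian product} $\StatesSet^{\varphi_1}\times\StatesSet^{\varphi_2}$. Taken literally this has $|\StatesSet^{\varphi_1}|\cdot|\StatesSet^{\varphi_2}|$ states, which would compound multiplicatively under nesting and destroy linearity. The resolution, which I would make precise, is that input-independence forces the two component monitors to advance \emph{in lockstep}, so the reachable fragment of the product is again a single orbit: its state at step $n$ is $(\sigma_1(n),\sigma_2(n))$, a lasso with transient $\max(t_1,t_2)$ and cycle length $\operatorname{lcm}(c_1,c_2)$. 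Since the only cycle-creating rule is the $q_{-1}\!\leftrightarrow\! q_{-2}$ back-edge inserted by $\Until$/$\Release$, every cycle length stays in $\{1,2\}$, whence $\operatorname{lcm}(c_1,c_2)\le 2$ is maintained. This yields the \emph{sub-additive} bound $n(\varphi_1\circ\varphi_2)\le\max(t_1,t_2)+\operatorname{lcm}(c_1,c_2)+O(1)\le n(\varphi_1)+n(\varphi_2)+O(1)$, replacing the feared multiplicative blow-up with addition.

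Finally I would close the induction by summing the per-rule recurrences over the syntax tree of $\varphi$: each of the $|\varphi|$ nodes contributes at most a constant number of states beyond the totals of its children, so $n(\varphi)=O(|\varphi|)$. Because $\delta_q$ is a function $\StatesSet\to\StatesSet$, every reachable state has out-degree exactly one, so the transition count equals the reachable-state count and is likewise $O(|\varphi|)$. The same bookkeeping shows each rule introduces only $O(1)$ fresh registers, so register overhead is linear as well, which completes the argument.
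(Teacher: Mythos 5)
Your proposal is correct and follows essentially the same argument as the paper's own proof: both rest on the observation that every constructed monitor is a lasso with a $1$- or $2$-state terminal loop, so operand monitors advance in lockstep and the reachable part of any product is again a single short-transient lasso rather than a full Cartesian product. Your version is simply a more rigorous rendering of that idea --- making the input-independence of $\delta_q$ explicit, tracking the $(t,c)$ parameters with the $\operatorname{lcm}(c_1,c_2)\le 2$ invariant, and noting that out-degree one makes the transition count equal the state count --- details the paper's terse proof leaves implicit.
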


Theorem ~\ref{theorem:linear}'s proof is provided in Appendix~\ref{proof_linearity}.
Note that for \LTLf a monitor using alternating automata can also be constructed in linear time \cite{de2015synthesis}.
\begin{algorithm}[h]
  \DontPrintSemicolon
  \SetAlgoLined
  \caption{\textsc{Synth}. Monitor construction with memoization for an \QLTLf formula.}
  
  \KwIn{\QLTLf formula $\varphi$, Cache }
  \KwOut{A reward monitor $\mathcal{A} = (\StatesSet, \InitState, \TransitionFn, \RegisterSet, \RegisterFn, \RewardFn)$}
  
  \If{$\varphi$ \textbf{is} in Cache}{
  \Return Cache[$\varphi$]\;
  }
  
  $\mathcal{A} \gets \textsc{construct}(\varphi)$\tcp*[l]{Build inductively}
  
  Cache[$\varphi$] $\gets \mathcal{A}$
  
  \Return $\mathcal{A}$\;
  \label{algo:synth}
\end{algorithm}

\begin{lemma}
    The procedure \textsc{synth} returns a quantitative reward monitor for the provided \QLTLf formula. 
    \label{lemma:formula_monitor}
\end{lemma}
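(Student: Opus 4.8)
The plan is to prove the lemma by structural induction on $\varphi$, establishing two things at once: that every case of \textsc{construct} returns a tuple of the shape required by the definition of a QRM (finite $\StatesSet$, designated $\InitState$, and well-defined $\TransitionFn,\RegisterFn,\RewardFn$), and that the returned monitor is \emph{semantically faithful} — as it consumes a finite trace $\lambda$, after reading the prefix $\lambda_{1:m}$ its reward register $t_\varphi$ stores $[\![\varphi,1]\!](\lambda_{1:m})$, the \QLTLf value of $\varphi$ (Definition~\ref{def:semantics}) evaluated from the start of the trace on the portion observed so far. Because \textsc{construct} recurses only on strict subformulas, the recursion is well-founded in formula size, so \textsc{synth} terminates, and since it only memoizes (keying the cache on the subformula) it returns exactly the monitor built by \textsc{construct}. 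Applying both inductive claims to the top-level formula then yields the lemma.

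For $\varphi=\top$ and $\varphi=p$ the monitor stores the constant $1$, respectively the label $\mathcal{L}_p(\lambda_1)$, which are $[\![\top,1]\!]$ and $[\![p,1]\!]$; for $\lnot\psi$ and $\varphi_1\land\varphi_2$ I would invoke the induction hypotheses and check that the adjoined register computes $1-\mathcal{V}(t_\psi)$ and $\min(\mathcal{V}(t_{\varphi_1}),\mathcal{V}(t_{\varphi_2}))$, while verifying that the Cartesian-product transition and the union of update maps preserve each factor's register trajectory, so both hypotheses survive in the product. The $\Next\psi$ case mirrors the atomic one but reads the label of the successor state $s'$, matching the shift $[\![\Next\psi,i]\!]=[\![\psi,i+1]\!]$ together with the $0$ boundary at the final index. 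A point I must make explicit here is that when these operators occur \emph{inside} a temporal operator, the renaming-and-refilling step of the construction keeps their atomic registers updated to the current-position labels $\mathcal{L}_p(s')$, so the sub-monitors supply $[\![\varphi_j,k]\!]$ at each position $k$ rather than freezing at position $1$.

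The temporal operators are the crux. Here the semantics of $\varphi_1\Until\varphi_2$ is forward-looking — $\max_{1\le j\le m}\min\bigl([\![\varphi_2,j]\!],\min_{1\le k<j}[\![\varphi_1,k]\!]\bigr)$ over the prefix — whereas the monitor updates its registers strictly left-to-right, so I would isolate an auxiliary recurrence lemma, proved by induction on $m$, stating that the three running registers $t_{\min\varphi_1}$, $t_{\max\varphi_2}$, $t_{\varphi_1\Until\varphi_2}$ reproduce this value: the running minimum $t_{\min\varphi_1}$ realizes the inner $\min_{k<j}$ obligation, while the accumulator update $t_{\varphi_1\Until\varphi_2}\gets\max\bigl(t_{\varphi_1\Until\varphi_2},\min(t_{\min\varphi_1},t_{\max\varphi_2})\bigr)$ realizes the outer $\max$ over split points. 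The Release case follows by the order-dual argument the construction prescribes, swapping every $\min$ for $\max$ and the initial register values $1$ and $0$, so that the same recurrence matches the $\min$–$\max$ semantics of $\Release$.

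I expect the recurrence lemma to be the real obstacle, because several conventions must be pinned down before it holds. First, the initialization ($t_{\min\varphi_1}=1$, $t_{\max\varphi_2}=0$) and the exact update order determine whether the accumulator is combined with the minimum of $\varphi_1$ over positions strictly before the witness for $\varphi_2$, as the semantics demands. Second, the vacuous/immediate-satisfaction case $j=1$, where $\varphi_2$ holds at the start with an empty (value-$1$) $\varphi_1$ obligation, is the delicate boundary that must be reconciled with a running minimum that otherwise always folds in the current $\varphi_1$ value. Third, the continuation edge $\TransitionFn(q_{-1})\gets q_{-2}$ together with $\RegisterFn(q_{-2},t)=\RegisterFn(q_{-1},t)$ keeps the registers live past the nominal end of the sub-monitor, which is what lets the accumulated value track $[\![\varphi_1\Until\varphi_2,1]\!]$ on every prefix rather than only at termination. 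Once these conventions are fixed and the recurrence is verified, the temporal cases close and, with the base, Boolean, and $\Next$ cases, complete the induction.
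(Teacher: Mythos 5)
Your proposal is correct and, like the paper, proceeds by structural induction over the cases of \textsc{construct}; the difference lies in what the induction carries. The paper deliberately splits the work: its proof of this lemma is a bare well-formedness check --- each base case explicitly builds all components $(\StatesSet, q_0, \TransitionFn, \RegisterSet, \RegisterFn, \RewardFn)$ of a QRM, and each inductive case combines the sub-monitors into a valid QRM --- while the semantic claim that the reward register equals $[\![\varphi, i]\!](\lambda)$ is deferred to a separate correctness theorem (Theorem~\ref{theorem:correctness}) with its own induction. You instead strengthen the induction invariant to carry well-formedness and semantic faithfulness simultaneously, so your single induction subsumes both the lemma and that theorem. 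What your route buys is precision exactly where the paper is thinnest: your auxiliary recurrence lemma for $\Until$ and $\Release$ pins down conventions the paper never states but that the semantics actually forces --- the accumulator update must combine the running maximum of $\varphi_2$ (through the current position) with the running minimum of $\varphi_1$ over \emph{strictly earlier} positions, with initialization $t_{\min\varphi_1}=1$ and $t_{\max\varphi_2}=0$; folding the current $\varphi_1$ value into the minimum before the accumulator update would be wrong (with $\varphi_1$-values $(0.9,\,0.1)$ and $\varphi_2$-values $(0,\,0.8)$ it yields $0.1$ where the semantics gives $0.8$). The cost is that you prove strictly more than the statement asks, and your temporal cases rest on a recurrence lemma that still needs to be written out by induction on the prefix length; the paper's decomposition keeps this lemma a two-line shape check and quarantines all semantic reasoning in one place.
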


The proof of Lemma~\ref{lemma:formula_monitor} appears in the Appendix~\ref{sec:lemma_proof}
\begin{theorem}[Correctness]
  \label{theorem:correctness}
  Let $\varphi$ be an \QLTLf formula, $\mathcal{A}_{\varphi}$ the QRM constructed from $\varphi$ as per Alg.~\ref{algo:synth}, and $\lambda$ a finite trace of length $n$. Let \registerAtI denote the value stored in the reward register at time index $i$. Then, for each index $1 \leq i \leq n$,
$$
\registerAtI = \traceAtI 
$$
\end{theorem}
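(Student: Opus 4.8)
The plan is to prove the identity by structural induction on the \QLTLf formula $\varphi$, in lockstep with the inductive definition of $\textsc{construct}$. Since $\textsc{construct}$ carries each subformula's registers upward through the unions $\mathcal{V} \gets \mathcal{V}^{\varphi_1} \cup \mathcal{V}^{\varphi_2} \cup \{\cdots\}$, the natural invariant to maintain is the stronger statement that for every subformula $\psi$ of $\varphi$ the register $t_\psi$ holds $[\![\psi, i]\!](\lambda)$ at time index $i$; the claim for the reward register then follows because $\RewardFn \gets \mathcal{V}(t_\varphi)$. I may assume Lemma~\ref{lemma:formula_monitor}, so that $\mathcal{A}_\varphi$ is a well-formed QRM and all referenced registers exist.

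For the base cases, $\varphi = \top$ gives $t_\top = 1 = [\![\top, i]\!]$, and $\varphi = p$ gives $\RegisterFn(\cdot, t_p) \gets \mathcal{L}_p(\lambda_i)$, which is exactly $[\![p, i]\!] = \mathcal{L}(\lambda_i)(p)$ by Def.~\ref{def:semantics}. For the Boolean connectives the register updates are verbatim the semantic clauses applied to the stored subformula values: $t_{\lnot\psi} = 1 - \mathcal{V}(t_\psi)$ and $t_{\varphi_1 \land \varphi_2} = \min(\mathcal{V}(t_{\varphi_1}), \mathcal{V}(t_{\varphi_2}))$, so the inductive hypothesis on the subformulas closes these cases immediately; the Cartesian-product construction is what guarantees that both component monitors advance synchronously, so the stored values refer to the same index $i$. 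For $\varphi = \Next\psi$ the register is refreshed from the successor-state labelling $\mathcal{L}_p(s')$ rather than the current state, matching $[\![\Next\psi, i]\!] = [\![\psi, i+1]\!]$, with the boundary case $i = |\lambda|$ (where the semantics returns $0$) handled by the monitor's terminal behaviour.

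The crux is the temporal case $\varphi = \varphi_1 \Until \varphi_2$ ($\Release$ being dual under the $\min/\max$ swap). Here the semantics is a nested extremum $\max_{i \le j \le |\lambda|}\min([\![\varphi_2,j]\!], \min_{i \le k < j}[\![\varphi_1,k]\!])$ over positions that an intermediate step has not yet observed, whereas the monitor recovers it incrementally through the running registers $t_{\min\varphi_1}$, $t_{\max\varphi_2}$, and $t_{\varphi_1 \Until \varphi_2}$. I would therefore run a secondary induction over the trace position, establishing the loop invariant that after processing up to index $i$ these three registers hold, respectively, the running minimum of $[\![\varphi_1,\cdot]\!]$, the running maximum of $[\![\varphi_2,\cdot]\!]$, and the partial accumulator $\max_j \min(\cdots)$ over the witnesses $j$ seen so far, so that the update $\delta_v(q, t_{\varphi_1 \Until \varphi_2}) \gets \max(\mathcal{V}(t_{\varphi_1 \Until \varphi_2}), \min(\mathcal{V}(t_{\min\varphi_1}), \mathcal{V}(t_{\max\varphi_2})))$ reconstructs one quantifier step at a time. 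The self-loop $\TransitionFn(q_{-1}) \gets q_{-2}$ together with $\RegisterFn(q_{-2}, t) \gets \RegisterFn(q_{-1}, t)$ is precisely what licenses these continuous updates past the nominal final state, while the atomic-register renaming and reinitialisation from $\mathcal{L}_p(\mathcal{S}_0)$ ensure the accumulation starts at the correct index.

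The main obstacle I anticipate is exactly this reconciliation: showing that the order in which the running minimum and running maximum are folded into the accumulator reproduces the semantic value and not a coarser or finer quantity. I would pin it down by proving that at each step the partial accumulator equals the semantic value restricted to the witnesses processed so far, and that $t_{\min\varphi_1}$ correctly realises the inner $\min_{i \le k < j}$ for the next candidate $j$; particular care is needed at the trace boundary, where the $j = |\lambda|$ term and the end-of-trace convention for $\Next$ interact. Once the $\Until$ invariant is secured, $\Release$ follows by the stated $\min/\max$ duality, and the derived operators $\Eventually, \Always$ reduce to their single-accumulator optimised monitors by the same argument.
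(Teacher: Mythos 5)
Your proposal follows essentially the same route as the paper's own proof: structural induction on $\varphi$ mirroring \textsc{construct}, with identical base cases, the Boolean and $\Next$ cases closed directly by the inductive hypothesis, and $\Release$ obtained from $\Until$ by $\min/\max$ duality. Where you differ, you are in fact more careful than the paper: its $\Until$ case is a one-sentence assertion that tracking $\max(\min(\min\varphi_1,\max\varphi_2))$ ``is equivalent'' to the semantic value, whereas your secondary induction over trace positions --- with the loop invariant that the accumulator equals the semantics restricted to the witnesses seen so far, plus the update-ordering and trace-boundary concerns you explicitly flag --- is precisely the argument needed to justify that assertion.
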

We provide the details of the proof of Theorem  \ref{theorem:correctness} in the Appendix~\ref{sec:correctness_proof}.

\subsection{Composition and Learning}

Learning involves taking the product between the quantitative monitor $\mathcal{A}$ and the MDP of the environment $\mathcal{M}$. We can define this product as the extended MDP  $\ExtendedMDP$
as follows:
\begin{definition}[Extended MDP] Let $\mathcal{M} = (\mathcal{S}, A, \Pr, \mathcal{R}, \gamma, \mathcal{L})$ be a labelled MDP  and $\mathcal{A} = (\StatesSet, \InitState, \TransitionFn, \RegisterSet, \RegisterFn, \RewardFn)$ a quantitative reward monitor. Their synchronous product $\mathcal{M} \otimes \mathcal{A} = (\mathcal{S}', A, \Pr',\mathcal{R}', \gamma,  \mathcal{L})$ is  defined as follows, where $q \in Q$, $a \in A$, and $s \in \mathcal{S}$:
\begin{itemize}
    \item State-space $\mathcal{S}' \eqdef Q \times \mathcal{S}$
    \item Transition kernel: $\Pr' \bigl((q,s),a,(q',s')\bigr) \eqdef$
    $$
          \begin{cases}
            \Pr(s,a,s') & \text{if } \TransitionFn\!\bigl((q, s) \bigr) = q',\\[4pt]
            0           & \text{otherwise.}
          \end{cases}
    $$ 
    \item Reward function: $\mathcal{R}'((q, s), a, (q', s')) \eqdef \RewardFn(\RegisterSet' = \RegisterFn((q', s'), \mathcal{L}(s')), \rho)$
    \item Remaining elements are inherited unchanged.
\end{itemize}
\end{definition}
The extended state-space including $q \in Q$ permits specifying non-Markovian (temporally extended) goals. However, a Markovian policy suffices when considering the product $\ExtendedMDP$, as stated in Theorem \ref{theorem:markovian_policy}.
\begin{theorem}
    \label{theorem:markovian_policy}
    A Markovian policy $\pi$ in MDP $\ExtendedMDP$ suffices to optimally capture the non-Markovian goals encoded by the monitor. 
\end{theorem}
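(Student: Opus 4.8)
The plan is to reduce the claim to the classical existence theorem for stationary optimal policies in standard, Markovian MDPs. The essential observation is that the product $\mathcal{M} \otimes \mathcal{A}$ is itself a bona fide MDP whose transition kernel $\Pr'$ and reward function $\mathcal{R}'$ depend only on the current product state and the chosen action, so that all the history-dependence of the original temporally-extended objective has been absorbed into the augmented component $q \in Q$ together with the register valuation it carries. I would therefore first argue that this augmentation genuinely restores the Markov property, and then invoke the standard result that such MDPs admit memoryless optimal policies.

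First I would establish a probability-preserving correspondence between trajectories of $\mathcal{M}$ and trajectories of $\mathcal{M} \otimes \mathcal{A}$. Because the monitor transition $\delta_q$ and the register update $\delta_v$ are deterministic, the monitor configuration at time $t+1$ is a deterministic function of the configuration at time $t$ and the observed label $\mathcal{L}(s_{t+1})$. Hence for every state sequence $s_0 s_1 \cdots s_n$ produced by $\mathcal{M}$ there is a unique induced sequence $(q_0,s_0)(q_1,s_1)\cdots(q_n,s_n)$ in the product, and the definition of $\Pr'$ (which injects randomness only through $\Pr(s,a,s')$, since $q'$ is determined from $(q,s)$) ensures the two sequences carry identical probabilities under any fixed policy. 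Consequently no information is lost or added by passing to the product, and the dynamics of $\mathcal{M} \otimes \mathcal{A}$ are Markovian on $\mathcal{S}' = Q \times \mathcal{S}$.

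Next I would use the correctness guarantee (Theorem~\ref{theorem:correctness}) to show that the per-step reward emitted along a product trajectory equals the intended non-Markovian reward of the underlying trace, namely the quantitative evaluation $\traceAtI$ scaled by $\rho$. Summing the discounted contributions then yields equality of returns: the discounted return of a product trajectory under the Markovian reward $\mathcal{R}'$ coincides with the discounted return of the underlying trace under the non-Markovian objective, so optimality is preserved in both directions of the correspondence. I would then appeal to the fundamental theorem of discounted MDPs: since $\mathcal{M} \otimes \mathcal{A}$ has Markovian dynamics and Markovian, bounded rewards (values in $[0,1]$ scaled by $\rho$), there exists a deterministic stationary policy $\pi^\star : \mathcal{S}' \to \Delta(A)$ that is optimal. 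Since a stationary policy on $Q \times \mathcal{S}$ is precisely a finite-memory policy on $\mathcal{S}$ whose memory is the monitor configuration, such a policy optimally captures the non-Markovian goal, which is the claim.

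The main obstacle is the Markovianization step: justifying that folding the monitor into the state truly removes all residual history-dependence. The register valuation is real-valued and updated by $\delta_v$, so one must verify that $\mathcal{R}'$ is a function of the current augmented state alone and does not covertly depend on unbounded history. The argument hinges on the registers forming a \emph{sufficient statistic} of the trace prefix for reward computation — exactly what Theorem~\ref{theorem:correctness} certifies — so that the full configuration $(q,\mathcal{V})$ evolves deterministically and Markovianly. A remaining subtlety to address is that the register-augmented state space may be uncountable; this is handled by noting that rewards are bounded and the induced dynamics are Borel-measurable, so the existence theorem for stationary optimal policies continues to apply.
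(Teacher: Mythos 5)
Your proposal is correct and follows essentially the same route as the paper: both arguments Markovianize the temporally extended objective by folding the monitor into the state, establish a probability- and reward-preserving correspondence between trajectories of the original process and trajectories of $\ExtendedMDP$ (exploiting determinism of $\TransitionFn$ and $\RegisterFn$, and the correctness theorem for the per-step rewards), and then transfer optimality across this correspondence. The differences are mainly presentational: the paper packages the correspondence in the NMRDP-equivalence framework of Brafman et al.\ and De~Giacomo et al., with explicit maps $\sigma(s) = (q_0, s)$ and $t(q,s) = s$ and a cited policy-equivalence lemma, whereas you re-derive the correspondence directly and additionally invoke the classical existence theorem for stationary optimal policies in discounted MDPs --- a step the paper leaves implicit, since its proof only shows that an optimal policy on the product (if one exists) induces an optimal policy for the NMRDP. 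One point where your version is actually tighter: you observe explicitly that the Markov property holds only once the register valuation $\RegisterSet$ is counted as part of the augmented state, so that the effective state space is the set of configurations $(q, \RegisterSet, s)$ and may be uncountable, and you flag the resulting measurability issue; the paper's proof silently treats $Q \times \mathcal{S}$ as the product state space even though its reward function reads the registers, so this subtlety is glossed over there.
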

A proof is provided in the Appendix~\ref{sec:markovian_policy_proof}.

\begin{table*}[ht]
  \centering

  \resizebox{\textwidth}{!}{
  \begin{tabular}{ll ccc ccc ccc}
      \toprule
       & 
       & \multicolumn{3}{c}{\textbf{Mean Episode Number}}
       & \multicolumn{3}{c}{\textbf{Mean Time (seconds)}}
       & \multicolumn{3}{c}{\textbf{Task Completion (\%, mean $\pm$ 95\% CI)}} \\
      \cmidrule(lr){3-5}\cmidrule(lr){6-8}\cmidrule(lr){9-11}
       & 
       & \textbf{Base} & \textbf{Boolean} & \textbf{Quant.}
       & \textbf{Base} & \textbf{Boolean} & \textbf{Quant.}
       & \textbf{Base} & \textbf{Boolean} & \textbf{Quant.} \\
      \midrule
      \multirow{4}{*}{\textbf{Classic}}
        & Acrobot        & \textbf{None} & 600* & None & \textbf{None} & 133.95* & None & \textbf{99.07\% $\pm 0.09\%$} & 4.84\% $\pm 0.40\%$ & 94.77\% $\pm 0.30\%$ \\
        & Cartpole       & \textbf{None} & None &  None & \textbf{None} & None & None & \textbf{44.67\% $\pm 0.42\%$} & 30.92\% $\pm 0.35\%$ & 33.34\% $\pm 0.37\%$ \\
        & Mountain Car   & 600* & 600* & \textbf{None} & 41.46* & 42.98* & \textbf{None} & 39.82\% $\pm 0.07\%$ & 37.04\% $\pm 0.06\%$ & \textbf{42.28\% $\pm 0.07\%$} \\
        & Pendulum       & \textbf{None} & 600* & None & \textbf{None} & 181.17* & None & \textbf{74.39\% $\pm 0.27\%$} & 45.03\% $\pm 0.18\%$ & 72.11\% $\pm 0.28\%$ \\
      \cmidrule(lr){1-11}
      \multirow{3}{*}{\textbf{Toy}}
        & Frozen Lake      & 216* & \textbf{45} & 29* & 0.04* & \textbf{0.03} & 0.01* & 58.86\% $\pm 0.10\%$ & \textbf{62.16\% $\pm 0.10\%$} & 58.96\% $\pm 0.10\%$ \\
        & Cliff Walking    & \textbf{21} & 25 & 2.3 & \textbf{0.01} & 0.04 & 0.003 & \textbf{85.33\% $\pm 0.05\%$} & 84.38\% $\pm 0.05\%$ & 84.95\% $\pm 0.05\%$ \\
        & Taxi             & \textbf{11} & 1301* & 1221* & \textbf{0.01} & 13.89* & 4.26 & \textbf{71.19\% $\pm 0.07\%$} & 46.50\% $\pm 0.03\%$ & 52.76\% $\pm 0.03\%$ \\
      \cmidrule(lr){1-11}
      \multirow{3}{*}{\textbf{Box2D}}
        & Bipedal Walker    & \textbf{None} & None & None & \textbf{None} & None & None & \textbf{16.78\% $\pm 0.54\%$} & 7.43\% $\pm 0.15\%$ & 14.43\% $\pm 0.41\%$  \\
        & Lunar Lander      & \textbf{None} & None & None & \textbf{None} & None & None & \textbf{57.45\% $\pm 0.36\%$} & 43.44\% $\pm 0.17\%$ & 45.61\% $\pm 0.16\%$ \\
      \cmidrule(lr){1-11}
      \multirow{3}{*}{\textbf{Safety Gridworlds}}
        & Island Navigation   & \textbf{1302} & 71* & 144* & \textbf{0.43} & 0.05* & 0.10* & \textbf{96.81\% $\pm 0.02\%$} & 76.37\% $\pm 0.03\%$ & 76.72\% $\pm 0.03\%$ \\
        & Conveyor Belt    & \textbf{191}* & 468* & 518* & \textbf{0.15} & 0.7* & 0.7* & \textbf{60.96\% $\pm 0.10\%$} & 27.1\% $\pm 0.09\%$ & 28.70\% $\pm 0.09\%$ \\
        & Sokoban     & 1129* & 71* & \textbf{103} & 0.47* & 0.11* & \textbf{0.14} & 50.23\% $\pm 0.01\%$ & 52.00\% $\pm 0.01\%$ & \textbf{53.15\% $\pm 0.02\%$} \\
      \bottomrule
  \end{tabular}}
  \caption{Convergence data for all environments, split by base reward function, Boolean and quantitative monitors. Task completion
  entries are mean $\pm$ 95\% confidence interval (half-width). (*) denotes suboptimal (comparatively) policy convergence. }
  \label{table:cw}
\end{table*}

\section{Experimental Evaluation}
\label{section:experimental_eval}
In this paper, we consider the notion of \emph{reward convergence} with respect to the different reward mechanisms: Boolean monitors, quantitative monitors, and the pre-defined reward function, as provided in Definition~\ref{def:reward_convergence}, measuring reward stability to check convergence and agent performance \cite{dulac2021challenges,machado2018revisiting,brockman2016openai}.

\begin{definition}[Reward Convergence] \label{def:reward_convergence}
Given episodic rewards $\{\mathcal{R}_t\}_{t=1}^T$, define the exponentially moving average $E_t$ as
$$
E_t \eqdef \beta E_{t-1}+(1-\beta)\mathcal{R}_t,\qquad \beta \eqdef 1-\tfrac{2}{N+1},
$$
with span $N$. Define checkpoints every $N$ steps: $C_i \eqdef E_{iN}$.

An RL run \emph{converges (reward-wise)} if, given a tolerance $\tau$, for the last $P$ checkpoint pairs,
\[
\lvert C_i - C_{i-1}\rvert \le \tau \quad \text{for all such } i.
\]
\end{definition}

To objectively measure learning performance, one method would be tracking \emph{convergence}, although this can be hindered by the fact that an agent may still converge to a sub-optimal policy, particularly when using a non-informative reward function. Additionally, it is also the case that cumulative rewards themselves are not as useful as a metric when the quantitative monitor and handcrafted function (from the original environment implementation) are able to produce a reward at each time step (as contrasted to the Boolean monitor), and at arbitrary magnitudes of intermediate rewards. 

Therefore, we use the notion of \emph{task completion}, which is an evaluative (unobservable for the agent) \emph{performance function} run at the terminal time point of the episode, and is universally suitable and uniform for all reward producers. This is similar to the performance function used in \cite{leike2017ai}.
\begin{definition}[Task completion]
    A performance function which assigns a scalar in the range of $[0, 1]$, computed at the \emph{terminal time step} (whether the episode ends by success, termination, or truncation). This signal is hidden from the agent during training, and is only used for evaluation.
\end{definition}

As for the scalars $\rho$ in the specification-reward pair, we select them to be the same in the Boolean and quantitative monitors in cases where the formulas are equivalent.

\begin{figure*}[t]
  \centering
  \begin{subfigure}{1.0\textwidth}
    \centering
    \includegraphics[width=\linewidth]{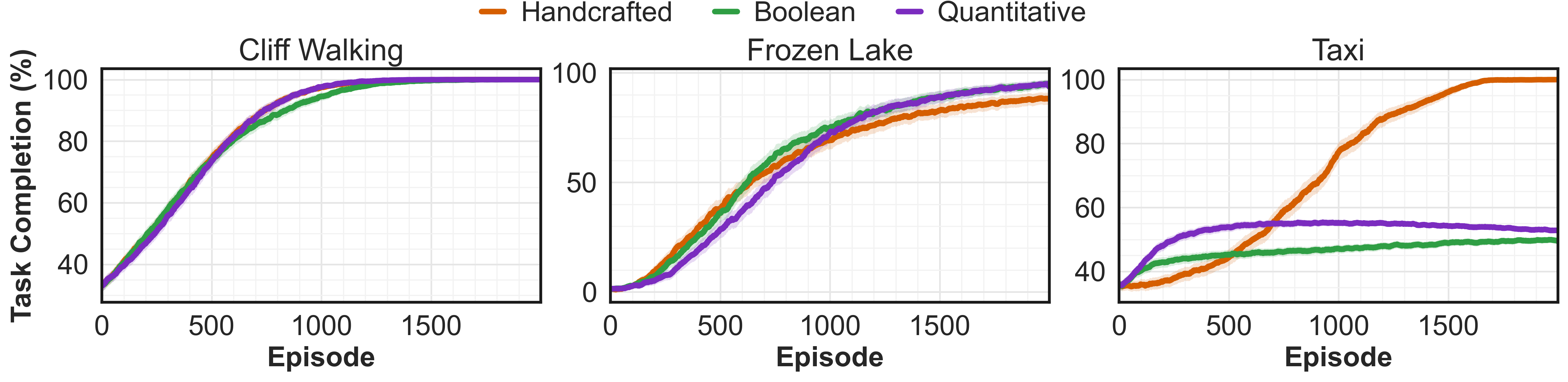}
    \label{fig:toy_graphs}
  \end{subfigure}
  \begin{subfigure}{1.0\textwidth}
    \centering
    \includegraphics[width=\linewidth]{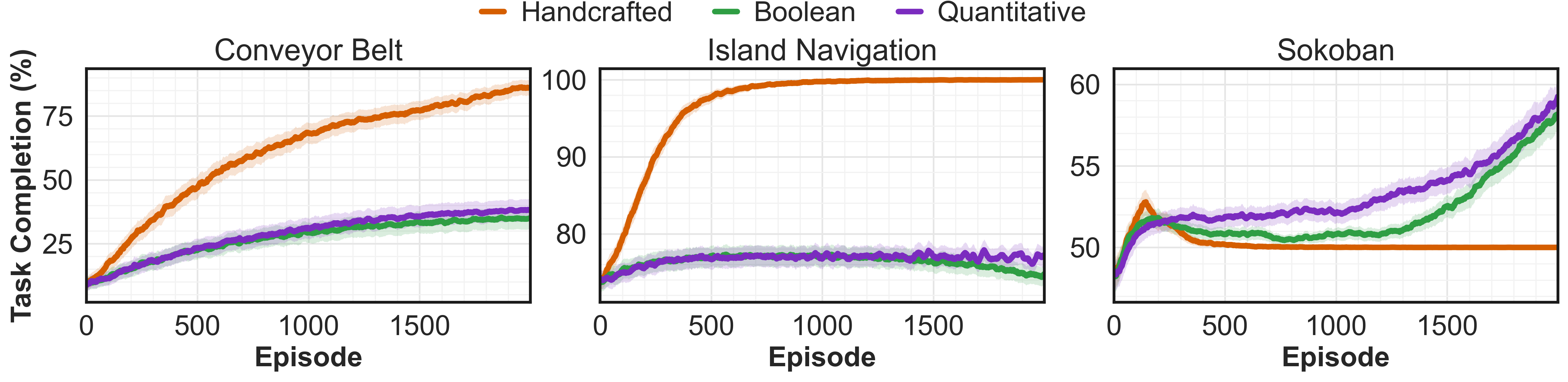}
    \label{fig:safety_gridworld_graphs}
  \end{subfigure}
  \caption{Task-completion percentages across \textsc{toy} and \textsc{safety-gridworlds} environments, $\pm$ 95\% confidence interval. Curves were smoothed using a moving average with a window size of $21$.}
  \label{fig:all_graphs}
\end{figure*}

\subsection{Environments} \label{subsec:environments}
Empirical results were gathered for our quantitative reward monitor across varying \textsc{Gymnasium} environments, specifically using the following sets of environments:
\begin{itemize}
    \item \textsc{Classic}: low-dimensional continuous-control benchmarks based upon classical control theory problems \cite{sutton1995generalization,6313077,moore1990efficient,aastrom2000swinging} that test stabilisation and swing-up behaviour, consisting of: \emph{Acrobot}, \emph{Cartpole}, \emph{Mountain Car}, and \emph{Pendulum}.
    \item \textsc{Toy}: small grid-world domains with finite state-action spaces that permit exact dynamic-programming or tabular RL evaluation \cite{dietterich2000hierarchical,brockman2016openai,sutton1998reinforcement}, which illustrate path-planning under severe penalties as well as hierarchical goal decomposition. We considered the following environments: \emph{Cliff Walking}, \emph{Frozen Lake}, and \emph{Taxi}.
    \item \textsc{Box2D}: continuous-control problems with contact dynamics simulated in the Box2D engine \cite{brockman2016openai}, requiring balance, locomotion, or soft landing.  We consider \emph{Bipedal Walker} and \emph{Lunar Lander}.
    \item \textsc{Safety Gridworlds}: a set of 2D gridworld environments provided by DeepMind \cite{leike2017ai}, that evaluate the safety of RL agents against undesirable behaviours such as unsafe exploration, safe interruptibility, irreversible side effects, reward gaming, and robustness to self-modification and adversaries. Our experiments consist of the following environments:  \emph{Conveyor Belt}, \emph{Island Navigation}, and \emph{Sokoban}.
\end{itemize}

For each of these environments, we compare the manually specified reward function (e.g. provided by \textsc{Gymnasium}), a Boolean reward monitor, and a quantitative reward monitor. The complete descriptions and specification of all environments including all fluent definitions (both Boolean and quantitative) against the observation and action trajectories is detailed in the Appendix~\ref{sec:appendix_env_spec_details}.

\subsubsection{Specifying quantitative properties}
For each environment specified, we need to specify Boolean and quantitative properties for our labelled MDP. We also make use of a quantitative measure of task completion. Let us consider the environment of \textsc{Cartpole} in Example \ref{example:cartpole_env}.

\begin{example}[Cartpole's complete environmental specification]
\label{example:cartpole_env}
  
  The relevant reward-specification pairs for \textsc{Cartpole} can be defined as the following for both the Boolean and quantitative settings: $\bigl( (\Eventually(\Always(reach\_goal)), 2), \: (\Always(balanced), 4) \bigr)$, with the former expressing a persistence property and the latter a safety property. For the first specification, we want to ensure that we always remain in an intended goal position (not slipping backwards or forwards, here we can assume $goal\_pos = 2$). The second specification requires always remaining balanced, which is useful regardless of how far the agent is from the goal position. 

  Below is the mathematical form of definitions for the labels used by the labelling function $\mathcal{L}(s)$ of the labelled MDP, or in other words the quantitative properties:
  \begin{eqnarray*}
      \textit{balanced} & = &
      \begin{cases} 
      \frac{0.209 - |angle|}{0.209}, & \text{if } |angle| \leq 0.209 \\ 
      0, & \text{else}
  \end{cases}\\
  %
      \textit{reach\_goal} & = &
      \begin{cases} 
      \frac{current\_pos}{goal\_pos}, & \text{if } current\_pos > 0.01 \\ 
      0, & \text{else}
      \end{cases}
  \end{eqnarray*}
  
  In the case of Boolean atoms, $balanced$ returns $\true$ iff $angle \leq 0.209$, else $\false$. As for $reach\_goal$, it is true if the agent is within $0.1$ of the $goal\_pos$. The task completion measure (performance function) is specified as an equally weighted mean of $reach\_goal$ and $balanced$.
\end{example}

\subsection{Reinforcement Learning Algorithms}
The following section briefly describes the algorithms we use, including both tabular and policy gradient settings. \emph{Tabular Q-learning} is used with \textsc{Toy} and \textsc{Safety Gridworlds} environments, due to faring well with the relatively small discrete state-action spaces. \emph{Proximal policy optimization} is used with \textsc{Classic} and \textsc{Box2D} environments to adequately handle continuous state and action spaces.

\textbf{Q-learning} \cite{watkins1992q} is a value-based method that iteratively updates the action-value function $Q(s,a)$ according to the Bellman optimality equation:
\[
Q(s,a) \leftarrow Q(s,a) + \alpha \bigl(r + \gamma \max_{a'} Q(s', a') - Q(s,a)\bigr),
\]
where $\alpha$ is the learning rate, $\gamma$ is the discount factor, $r$ is the reward observed, $(s,a)$ denotes the current state-action pair, and $(s',a')$ denotes the next state-action pair. 

\textbf{Proximal policy optimization} (PPO)
\cite{schulman2017proximal} is a policy-gradient method designed for continuous or high-dimensional state and action spaces. Instead of directly estimating action-values in a table, PPO maintains a parameterized policy $\pi_{\theta}(a|s)$ and optimizes it by maximizing a clipped surrogate objective:
\[
L^{\text{CLIP}}(\theta) = \hat{\mathbb{E}}_{t} \left[ \min(r_t(\theta)\hat{A}_t, 
\text{clip}(r_t(\theta), 1 - \epsilon, 1 + \epsilon)\hat{A}_t) \right],
\]
where $r_t(\theta) = \frac{\pi_{\theta}(a_t|s_t)}{\pi_{\theta_\text{old}}(a_t|s_t)}$ is the probability ratio between the new and old policies, $\hat{A}_t$ is the advantage function estimate at time step $t$, and $\epsilon$ is a hyperparameter that bounds the ratio, helping to prevent destructive policy updates.  Due to its on-policy nature, PPO is relatively stable. Its clipping mechanism ensures that updates do not move the policy too far from the current parameters, which helps mitigate high-variance updates and promotes more reliable convergence.

\subsection{Experimental Results}
Table \ref{table:cw} presents our results in terms of reward convergence time and episode number, as well as our evaluation metric of quantitative task completion.  The mean convergence time and episode number are taken from averaging a number of runs, with each run containing a fixed amount of training episode iterations. Likewise, the task completion metric is averaged over a number of runs, for each episode across the learning, where the terminal time step outputs the task completion progress. 
The number of episodes and runs used depends on the environment, where the number of episodes is shown in the graphs in Figure~\ref{fig:all_graphs}, 
and the remaining hyperparameters are specified in Appendix~\ref{sec:appendix_hyperparams}.

Across environments, the quantitative monitor consistently matches or outperforms the Boolean monitor, and sometimes surpasses the manually specified reward function. Within simple and small gridworld-like environments, it was often simpler to use Boolean signals, rather than distance metrics such as Manhattan distance, as they do not take into account obstacles (thus encouraging exploring around non-optimal areas). Hence, the performance difference between Boolean and quantitative monitors is more pronounced in \textsc{Classic} and \textsc{Box2D} environments. We restricted ourselves to defining a quantitative specification using only the information known to the agent (observations and actions), which is not always the case where manually-specified reward functions are used. Our approach is easily extensible to incorporating such environmental signals by extending the scope of the labelling function. 
The learning performance gap of the quantitative and Boolean monitors is entirely determined by how well a quantitative measure can be defined per environment using the state trajectory, which is domain-dependent. However, in cases where writing a quantitative specification proves difficult, the performance should always be at least equal to using a Boolean monitor.
\section{Related Work}
\label{sec:related_work}

\textbf{Reward engineering} is the practice of designing and tuning the reward structure to better align with the intended learning outcomes. Certain approaches exist specifically to counter \emph{sparse} and \emph{delayed} rewards, which remain compatible with QRMs and are useful for environments where feedback can only be provided towards the end of an episode. \emph{Reward redistribution} techniques such as Align-RUDDER \cite{patil2022align} shift terminal returns onto a few decisive actions, densifying feedback while preserving optimal policies. Causal variants \cite{zhang2023interpretable} relax alignment, but are susceptible to weak confounding. \emph{Hindsight Experience Replay} (HER) \cite{andrychowicz2017hindsight} retroactively relabels failed roll-outs with achieved goals, generating synthetic dense rewards, though goal selection becomes intractable in large or partially observed spaces and may bias value estimates. \emph{Reward shaping} augments each step-return with $F(s_t,s_{t+1})=\gamma\Phi(s_{t+1})-\Phi(s_t)$, guaranteeing policy invariance \cite{ng1999policy}. Potentials $\Phi$ are either handcrafted (e.g., distance-to-goal) or learned \cite{grzes2010online}, which must be informative yet low-variance.

\paragraph{Temporal logic rewards.}  A survey of reinforcement learning with temporal logic rewards is presented in \cite{liao2020survey}. In \cite{brafman2018ltlf,de2019foundations}, non-Markovian reward formulas expressed in \LTLf/\LDLf\ are translated into minimal DFA. The extended MDP is obtained as the synchronous product of the original MDP with these DFA components, guaranteeing minimality; because the DFA can be progressed symbol-by-symbol, the construction can also be carried out on-the-fly. The approach was refined in \cite{de2020temporal}, which merges the individual DFA into a single reward transducer, which can provide exponential savings. This compact transducer supports the four-valued runtime-monitoring semantics for \LTLf. Furthermore, our reward monitor can be constructed in linear time. Prior work has integrated $\textsc{gym}$ environments with reward monitors using the runtime monitoring language \cite{unniyankal2023rmlgym, hasanbeig2020deep}. 
Compared to these works, our approach has rewards that are densely provided with quantitative information, which is critical for addressing sparse rewards and sample efficiency.

\emph{Reward Machines} \cite{icarte2022reward,icarte2018using} represent non-Markovian rewards as Mealy machines whose edges are represented by modelled events. This makes them intuitive when crisp, symbolic events are available and supports automated potential-based shaping once the reward machine is given, 
although the effectiveness of the automated shaping  is limited to simple monitors 
as the states and actions of the environment are not considered in the shaping, merely desired paths through the monitor provide further rewards.
Recent works include extensions with counting automata \cite{bester2023counting}, first order representations \cite{ardon2024form}, and learning of reward machines \cite{parac2024learning} for inverse RL.

\paragraph*{Quantitative semantics.} \citet{li2017reinforcement} introduces quantitative semantics for robustness, but requires the whole trace to produce a robustness reward, therefore not being useful against sparse rewards and for long-horizon tasks.  \citet{balakrishnan2019structured} use quantitative Signal Temporal Logic (STL), 
for reward shaping, but assumes a bounded window length to keep complexity low and uses unnormalised atoms. \citet{hamilton2022training} outsources STL monitoring to an external tool making compositionality non-trivial, which is crucial for safety specifications and adequately handling non-Markovian properties, also again relying on unnormalised atoms. \citet{jothimurugan2019composable} use quantitative semantics, but the reward produced is not dense: a raw reward of $-\infty$ is given until the final monitor state, thus the method is based on reward shaping, and assumes bounds are supplied. There are scaling concerns, as each monitor state involves a separate policy head neural network.  

Numeric reward machines are introduced by \citet{levina2024numeric} which retain Boolean transitions but introduce negative distance-to-goal formulas for providing denser rewards. However, quantitative temporal semantics are not used, nor is any synthesis algorithm provided to construct such machines from a formal specification.

Finally, all of the aforementioned papers overlook safety in the compositionality process, whereby safety properties are able to veto all rewards globally.

\section{Conclusions and Future Work}
\label{sec:conclusion}
In this paper, we have shown how to construct reward monitors using quantitative, linear time \QLTLf specifications, and then demonstrated the effectiveness of our approach against Boolean monitors and handcrafted reward functions, across various environments with differing reward characteristics. We empirically showed that our quantitative monitor surpasses or matches the performance of a Boolean monitor, and in some cases, surpasses the performance of the manually-specified reward function. Learning uses the product between the environment MDP and the composite reward monitor formed from all specification-reward pairs. Additionally, we syntactically identify which monitors target safety properties, which when violated, have the power to block rewards received from all other monitors indefinitely.

For future work, it would be interesting to consider discounted rewards which could form part of the specification tuple, or be used more directly through discounted \LTL \cite{alur2023policy}. Quantitative temporal operators could also be incorporated, such as in \cite{frigeri2012fuzzy,mu12024checking}, where $\Eventually_t$ denotes eventually in the next $t$ instants ("within"), similarly for always within the next $t$ instants, as well as other more complex operators such as "almost always" and "soon", "nearly always", "gradually", which use fuzzy semantics, and may assist in learning convergence. Another aspect that could be investigated is robustness specifications \cite{anevlavis2022being}, which could help with other aspects such as curriculum learning (i.e., being confident that an agent has learnt a principal sub-task prior to advancing to a more difficult task). Our approach would also be suitable for the multi-agent setting, where \QATLf \cite{ferrando2024theory} could be used.


\section*{Acknowledgments} The research described in this paper was
partially supported by the EPSRC (grant number EP/X015823/1) and the UKRI Centre for Doctoral Training in Safe and Trusted AI (grant number EP/S0233356/1). We also thank Jialue Xu and Alexander Philipp Rader for contributing some of the original code for the reward monitors.


\bibliography{cite}

\clearpage

\ifwithappendix
\appendix
\section*{Appendix}

\section{Background}
\label{sec:appendix_background}

The following provides the semantics of how a \QLTLf formula is interpreted with a real-value between $[0, 1]$. The remaining (abbreviated operators) not found in the main matter can be defined as the following, given in simplified form:

$$
\begin{aligned}
    [\![\false, i]\!](\lambda) &= 0,\\
    [\![\varphi_{1}\vee \varphi_{2}, i]\!](\lambda) &= 
      \max\bigl([\![\varphi_{1}, i]\!](\lambda),\,[\![\varphi_{2}, i]\!](\lambda)\bigr),\\
    [\![\Eventually\,\varphi, i]\!](\lambda) &= \max_{\,i \le j \le |\lambda|} \bigl[\![\varphi, j]\!\bigr](\lambda),\\
    [\![\Always\,\varphi, i]\!](\lambda) &= \min_{\,i \le j \le |\lambda|} \bigl[\![\varphi, j]\!\bigr](\lambda).
\end{aligned}
$$

\section{Quantitative Reward Monitor}
\label{sec:appendix_quantitative}

\subsection{Construction}
\label{sec:appendix_quantitative_construction}
Connective cases that are abbreviations can be defined as follows:
\begin{itemize}
   \item For $\varphi = \false$, the monitor is assigned states $\StatesSet \gets \{q_0\}$, set of register-value pairs $\mathcal{V} \gets \bigl\{(t_{\false}, 0)\bigr\}$, transition function $\TransitionFn \gets \bigl\{(q_0, q_0)\bigr\}$, register update function $\delta_v \gets \emptyset $, and reward function $\delta_r \gets \mathcal{V}(t_\bot)$.

  \item For disjunction $\varphi = \varphi_1 \lor \varphi_2$, this is constructed the same as conjunction, tracking the $\max$ instead of $\min$.
\end{itemize}

\subsubsection{Optimisation}
Note that, for temporal eventually $\Eventually$ and always $\Always$, the respective abbreviations provided can be used. Alternatively, with effort to reducing the number of registers, one could track the $\max$ and $\min$ of the formulas respectively as a performance optimisation. Likewise, when either $\Eventually \varphi$ or $\Always \varphi$ are nested inside the other, only the latest value of $\varphi$ needs to be tracked.

Since quantitative monitors have a deterministic transition function, we can let $q_{-1}$ and $q_{-2}$ denote the final state of a monitor and the penultimate state of a monitor respectively. As with temporal until $\mathcal{U}$, all contained atomic registers are \emph{initialised} using the initial state labels $\mathcal{L}_p(\mathcal{S}_0)$ and are \emph{updated} on each step using the successor-state labels $\mathcal{L}_p(s')$, and registers are renamed to avoid name clashing.

\begin{itemize}
  \item For temporal always $\varphi = \Always \psi$
  \begin{itemize}
    \item If $\psi = \Eventually \varphi$, then $\mathcal{A}^{\varphi} \gets \textsc{Synth}(\varphi)$, $\StatesSet \gets \StatesSet^{\varphi}$, $\InitState \gets q_0^{\varphi}$, continuous updates are supported by using the renamed registers and adding a loop between the penultimate and ultimate state: $\TransitionFn \gets \delta_q^{\varphi}$, $\TransitionFn(q_{-1}) \gets q_{-2}$, $\RegisterFn(q_{-2}) \gets \RegisterFn(q_{-1})$, $\RegisterSet\gets \RegisterSet^{\varphi} \cup \{ (t_{\Always (\Eventually \varphi)}, 0) \}$, and $\RegisterFn \gets \delta_v^{\varphi}$. Then for all $q \in \StatesSet$, do $\delta_v(q, t_{\Always(\Eventually \varphi)}) \gets \RegisterSet(t_{\varphi})$. And finally, $\delta_r \gets \RegisterSet(t_{\Always(\Eventually \varphi)})$.
    \item Otherwise, $\mathcal{A}^{\psi} \gets \textsc{synth}(\psi)$, $\StatesSet \gets \StatesSet^{\psi}$, $q_0 \gets q_0^{\psi}$, $\delta_q \gets \delta_q^{\psi}$, continuous updates are supported by using the renamed registers and adding a loop between the penultimate and ultimate state: $\TransitionFn(q_{-1}) \gets q_{-2}$, $\RegisterFn(q_{-2}) \gets \RegisterFn(q_{-1})$, $\RegisterSet \gets \RegisterSet^{\psi} \cup \{(t_{\Always \psi}, 1)\}$, $\delta_v \gets \delta_v^{\psi}$. Then, for all $q \in \StatesSet$ do: $\delta_v(q, t_{\Always \psi}) \gets \min(\RegisterSet(t_{\Always_\psi}), \RegisterSet(t_{\psi}))$. Finally, $\delta_r \gets \RegisterSet(t_{\Always \psi})$.
  \end{itemize}
  \item For temporal eventually $\varphi = \Eventually \psi$
  \begin{itemize}
    \item For the case of $\psi = \Always \varphi$, then: $\mathcal{A}^{\varphi} \gets \textsc{synth}(\varphi)$, $\StatesSet \gets \StatesSet^{\varphi}$, $q_0 \gets q_0^{\varphi}$, $\TransitionFn \gets \delta_q^{\varphi}$, continuous updates are supported by using the renamed registers and adding a loop between the penultimate and ultimate state: $\TransitionFn(q_{-1}) \gets q_{-2}$, $\RegisterFn(q_{-2}) \gets \RegisterFn(q_{-1})$, $\RegisterSet \gets \RegisterSet^{\varphi} \cup \{(t_{\Eventually(\Always \varphi)}, 0)\}$, $\delta_v \gets \delta_{v}^{\varphi}$. Then for all $q \in \StatesSet$, do $\delta_v(q, t_{\Eventually(\Always \varphi)}) \gets \RegisterSet(t_{\varphi})$. Finally, $\delta_r \gets \RegisterSet(t_{\Eventually (\Always \psi)})$.
    \item Otherwise, $\mathcal{A}^{\psi} \gets \textsc{synth}(\psi)$, $\StatesSet \gets \StatesSet^{\psi}$, $q_0 \gets q_0^{\psi}$, $\TransitionFn \gets \delta_q^{\psi}$,  continuous updates are supported by using the renamed registers and adding a loop between the penultimate and ultimate state: $\TransitionFn(q_{-1}) \gets q_{-2}$, $\delta_v(q_{-2}) \gets \delta_v(q_{-1})$, $\RegisterSet \gets \RegisterSet^{\psi} \cup \{(t_{\Eventually \psi}, 0)\}$, $\delta_v \gets \delta_v^{\psi}$. Then, for all $q \in \StatesSet$ do: $\delta_v(q, t_{\Eventually \psi}) \gets \max(\RegisterSet(t_{\Eventually \psi}), \RegisterSet(t_{\psi}))$. Finally, $\delta_r \gets \RegisterSet(t_{\Eventually \psi})$.
  \end{itemize}
\end{itemize}

In the case where always $\Always$ and eventually $\Eventually$ are nested into each other, such that $\Eventually(\Always(\varphi))$ or $\Always(\Eventually(\varphi))$, this is simplified into $last(\varphi)$ by the \QLTLf semantics. As we are using runtime monitoring, the current trace index, $s'$ from a transition tuple ($s, a, s'$) corresponds to the last trace index.

\subsection{Composite Reward Monitor} \label{sec:appendix_composite_qrm}
A single merged reward monitor is provided by Definition~\ref{def:composite_reward_monitor}.
\begin{definition}(Composition of quantitative monitors)  
\label{def:composite_reward_monitor}
Let $\mathcal{A}_c$ denote the composition of reward monitors
$\{\mathcal{A}_i\}_{i=0}^{n}$, where $n$ is the number of monitors.
\begin{itemize}
    \item State-space, $\StatesSet_c \eqdef \StatesSet_0 \times \StatesSet_1 \times \dots \times \StatesSet_n$
    
    \item Tuple of initial states from each monitor, such that  $(q_0^0, \: q_0^1, \: \dots, \: q_0^n)$
    
    \item Register-value pairs, $\mathcal{V} \eqdef \mathcal{V}_0 \cup \mathcal{V}_1 \cup \dots \cup \mathcal{V}_n$.
    
    \item State-transition function, $   \TransitionFn\left(q_0,\dots,q_n\right) \eqdef$
    $ \\
    \left(\delta_{q_0}(q_0), \dots, \delta_{q_n}(q_n)\right)
    $
    
    \item Register-update function, $ \delta_v\left(\left(q_0, \ldots, q_n\right), t\right)\eqdef$ 
    $ \\
    \begin{cases}\delta_{v_0}\left(q_0, t\right) & \text { if } t \in \mathcal{V}_0 \\ \ldots & \\ \delta_{v_n}\left(q_n, t\right) & \text { if } t \in \mathcal{V}_n\end{cases}$
    
    \item Reward function, $\delta_r : \mathcal V \to \mathbb R \eqdef$
    $ \\
    \begin{cases}
      \zeta \in \mathbb{R}, & \hspace{-1.25cm} \parbox[t]{.6\textwidth}{\ \ \ \ \text{if any safety property is violated ($\exists s. \: \varphi_{s} = \bot $)}},\\[6pt]
      \displaystyle \sum_{i=0}^{n} \delta_{r_i}(\mathcal V_i)
        & \text{otherwise.}
    \end{cases}
    $
\end{itemize}
\end{definition}

\subsection{Proof of Lemma 1}
\label{sec:lemma_proof}

\begin{proof}(By structural induction) In the base cases, $\true$, $\false$, and every atomic proposition $p$, the procedure \textsc{synth} explicitly constructs and returns all components of the corresponding QRM. For the inductive step, assume \textsc{synth} yields correct QRMs for the immediate subformulae; then, for each composite \QLTLf operator, it combines these monitors according to the operator's semantics, thereby producing a valid QRM for the entire formula.
    \label{proof:lemma_synth_qltlf}
\end{proof}

\subsection{Proof of Correctness}
\label{sec:correctness_proof}

\begin{proof}
  We prove this by structural induction, as we can enumerate the cases involved in a \QLTLf formula and its monitor construction. Recall that the monitor is invoked after an action is performed, involving the triple $(s, a, s')$. Also recall that, since a quantitative reward monitor provides runtime monitoring, the finite trace $\lambda$ corresponds to the current execution trace.

  \textbf{Base cases.} The following outlines the base cases, which are atomic and trivial formulas:
  \begin{enumerate}
    \item $\varphi = \true$, we construct a monitor with a single state with a reward register for $\true$ always set to 1. The semantic value of the formula for every position $i$ is $\bigl[\![\true,i]\!\bigr](\lambda) = 1$.
    \item $\varphi = \false$, we construct a monitor with a single state with a reward register for $\false$ always set to 0. The semantic value of the formula for every position $i$ is $\bigl[\![\false,i]\!\bigr](\lambda) = 0$.
    \item $\varphi = p$, for an atomic proposition, the semantic value at a given time step $i$ is $\bigl[\![\,p,i\,]\!\bigr](\lambda)$. The monitor constructed transitions from the initial state to an empty self-looping state with no further register updates. The initial state provides a reward output of the current valuation of $p$ from the labelling function $\mathcal{L}_p(s)$ of the MDP, and this is stored permanently in the reward register, which corresponds to evaluating a proposition $p$ at index $i$.
  \end{enumerate}

  \textbf{Inductive cases.} 
  \begin{enumerate}
    \item Negation $\varphi = \lnot \psi$, by constructing the monitor $\mathcal{A}^\psi$, we simply append a register of $\lnot \psi$, updating in all states, which runs $1 - \mathcal{V}(\psi)$ (which is equivalent to the formula's interpretation), and use this as the reward register.
    \item Conjunction $\varphi = \psi_1 \land \psi_2$, by constructing the respective submonitors of $\mathcal{A}^\psi 1$ and $\mathcal{A}^\psi 2$, taking the Cartesian product of the state and transition space, tracking the propositional values of each formula, we append an additional register that tracks the minimum between the two formulae such that $\min(\psi_1, \psi_2)$ and use this as the reward register, which provides the correct output for time step $i$.
    \item Disjunction is the same as conjunction, tracking the $\min$ instead of $\max$.
    \item Next $\varphi = \Next \psi$ is almost identical to the base case of an atomic proposition, but requires accessing the next timestep $i+1$ instead of $i$. Whilst requiring a future time step's value is typically an issue with runtime verification, when performing an action we have the triple ($s, a, s'$), therefore the labelling of $s'$ is considered for $i+1$.
    \item Until $\varphi = \psi_1 \: \Until \: \psi_2$, is determined by the balance of (i) how strongly $\varphi_1$ holds in all steps until $\varphi_2$ holds, and (ii) how strongly $\varphi_2$ holds at that point. This is equivalent to taking the maximum value of $\min(\min \varphi_1, \max \varphi_2)$ throughout the trace, which is how the corresponding reward register is set up and what is being tracked.
    \item Release $\varphi = \psi_1 \Release \psi_2$ is handled similarly to Until, forming the required registers to take the minimum value of $\max(\max \varphi_1, \min \varphi_2)$ throughout the trace, which is how the corresponding reward register is set up.
  \end{enumerate}
\end{proof}

\subsection{Proof of Linearity}
\label{proof_linearity}

\begin{proof}
    It is clear that the base cases of construction, that is, an atom $p, \top, \bot$ only add a maximum of 2 states and 2 transitions.
    
    For any inductive case, namely a binary or temporal connective that is processed, the algorithm forms the product of the operand monitors, but every operand monitor it produces is a lasso (automata-theoretic concept of a finite chain whose tail is an infinite loop, in this case a 1-state or 2-state loop). Because such chains advance in lock-step, each input symbol can add at most one previously unseen composite state before all operands sit in their final loop. Hence the total number of states in the resulting monitor grows linearly in the formula size.
\end{proof}

\subsection{Markovian Policy Proof}
\label{sec:markovian_policy_proof}

In this section we prove Theorem~\ref{theorem:markovian_policy}, namely that an optimal Markovian policy on the synchronous product MDP $\ExtendedMDP$ is sufficient to optimally realise the non-Markovian objectives encoded by the quantitative reward monitor. We first introduce Non-Markovian Reward Decision Processes in Definition~\ref{def:nmrdp}.
\begin{definition}[NMRDP] \label{def:nmrdp}
  A \emph{Non-Markovian Reward Decision Process} is defined as 
  $\mathcal{N} = (\mathcal{S}, \mathcal{A}, \Pr, \gamma, \mathcal{R})$, where $\mathcal{R}$ can depend on the entire history of state-action pairs, such that $\mathcal{R} : (\mathcal{S} \times \mathcal{A})^* \rightarrow \mathbb{R}$. The other elements are inherited unchanged from an MDP.
\end{definition}

In our setting, $\mathcal{R}$ is given compositionally by a set of specification-reward pairs $\{(\varphi_i,\rho_i)\}_{i=1}^n$: each $\rho_i$ contributes based on the quantitative satisfaction of the formula $\varphi_i$ on the trace.

The proof of Theorem~\ref{theorem:markovian_policy} is a straightforward adaptation of results in \cite{de2020temporal,brafman2018ltlf}, but we restate the results for completeness.

We follow the standard notion of equivalence between an NMRDP and an extended MDP as in \citet{brafman2018ltlf,de2020temporal}.

\begin{definition}\label{def:equivalence_nmrdp}
  As in \cite{brafman2018ltlf}, a NMRDP $\mathcal{N} = (\mathcal{S}, \mathcal{A}, \Pr, \gamma, \mathcal{R})$ is equivalent to an extended MDP $M' = (\mathcal{S}', \mathcal{A}, \Pr', \mathcal{R}', \gamma)$, if two functions exist $t: \mathcal{S}' \to \mathcal{S}$ and $\sigma: \mathcal{S} \to \mathcal{S}'$, such that:
  \begin{enumerate}
    \item $\forall s \in \mathcal{S}: t(\sigma(s)) = s$.
    \item $\forall s_1, s_2 \in \mathcal{S}$, $s_1' \in \mathcal{S}'$, and $a \in \mathcal{A}$:
          if $\Pr(s_1, a, s_2) > 0$ and $t(s_1') = s_1$, there exists a unique $s_2' \in \mathcal{S}'$ such that $t(s_2') = s_2$ and $\Pr'(s_1', a, s_2') = \Pr(s_1, a, s_2)$.
    \item For any feasible trajectory  $(s_0, a_1, \ldots, s_{n-1}, a_n)$ of $\mathcal{N}$ and $(s_0', a_1, \ldots, s_{n-1}', a_n)$ of $M'$, such that $t(s_i') = s_i$ and $\sigma(s_0) = s_0'$, we have $\mathcal{R}(s_0, a_1, \ldots, s_{n-1}, a_n) = \mathcal{R}'(s_0', a_1, \ldots, s_{n-1}', a_n)$.
  \end{enumerate}
\end{definition}

Informally, in Definition~\ref{def:equivalence_nmrdp}, $t$ projects extended states to original states, $\sigma$ embeds original states into the extended model, and the three conditions express that transitions and rewards are preserved along corresponding trajectories.

\begin{lemma}[Policy equivalence] \label{lemma:markovian_equivalence}
  \cite{de2020temporal}
  Let $\mathcal{N}$ and $M'$ be equivalent in the sense of Definition~\ref{def:equivalence_nmrdp}.
  A policy $\pi$ for $\mathcal{N}$ and a policy $\pi'$ for $M'$ are \emph{equivalent} if, for every feasible trajectory, they induce the same sequence of rewards. Equivalent policies have the same value for every initial state.
\end{lemma}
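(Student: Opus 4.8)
The plan is to show that for every initial state $s \in \mathcal{S}$ the value $V^{\pi}(s)$ in $\mathcal{N}$ equals $V^{\pi'}(\sigma(s))$ in $M'$, by reducing the comparison of two expectations to a trajectory-by-trajectory comparison. Since traces are finite and the return is the discounted sum $\sum_t \gamma^t \mathcal{R}_{t+1}$, each value is an expectation over feasible trajectories weighted by their induced probabilities. First I would fix $s$, set $s_0' = \sigma(s)$, and exhibit a bijection $\Phi$ between the feasible trajectories of $\mathcal{N}$ starting at $s$ and those of $M'$ starting at $s_0'$, such that corresponding trajectories carry (i) equal probability and (ii) identical reward sequences. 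Granting such a $\Phi$, the discounted returns agree trajectory-by-trajectory, hence so do the two expectations, giving $V^{\pi}(s) = V^{\pi'}(\sigma(s))$.

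To construct $\Phi$ I would induct on trajectory length. The base case pairs $s$ with $s_0' = \sigma(s)$, for which $t(s_0') = s$ by condition~1 of Definition~\ref{def:equivalence_nmrdp}. For the inductive step, suppose a prefix $(s_0, a_1, \ldots, s_{n-1})$ of $\mathcal{N}$ has been lifted to $(s_0', a_1, \ldots, s_{n-1}')$ of $M'$ with $t(s_i') = s_i$. Given a feasible continuation $a_n, s_n$ with $\Pr(s_{n-1}, a_n, s_n) > 0$, condition~2 supplies a \emph{unique} $s_n' \in \mathcal{S}'$ with $t(s_n') = s_n$ and $\Pr'(s_{n-1}', a_n, s_n') = \Pr(s_{n-1}, a_n, s_n)$. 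Uniqueness is the crucial point: it shows the extended state is a deterministic function of the $\mathcal{N}$-history, so lifting introduces no additional branching and the map inverts by projecting each extended state through $t$. Hence $\Phi$ is a genuine bijection on feasible trajectories.

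It then remains to verify that $\Phi$ preserves both probabilities and rewards. The probability of a trajectory factorises into a product of policy action-probabilities and transition probabilities; condition~2 yields equality of the transition factors along corresponding trajectories, and the hypothesis that $\pi$ and $\pi'$ are equivalent ensures the action-probability factors agree at corresponding decision points, so the two trajectory probabilities coincide. Reward equality along corresponding trajectories is exactly condition~3 (equivalently, the defining property of equivalent policies). Combining these, each pair of corresponding trajectories contributes the same discounted return with the same weight, and summing over $\Phi$ yields $V^{\pi}(s) = V^{\pi'}(\sigma(s))$; since $s$ was arbitrary, the two policies share the same value at every initial state. I expect the main obstacle to be the inductive construction of the probability-preserving bijection — in particular, using the uniqueness clause of condition~2 to rule out any extra stochasticity on the $M'$ side, so that the distribution over $M'$-trajectories is precisely the pushforward under $\Phi$ of the distribution over $\mathcal{N}$-trajectories; the reward equality, handed to us directly by condition~3, is comparatively routine.
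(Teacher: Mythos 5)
You should first note what the paper itself does here: Lemma~\ref{lemma:markovian_equivalence} is never proved in the paper. It is stated with a citation to \cite{de2020temporal} and then used as a black box inside the proof of Theorem~\ref{theorem:markovian_policy}, so there is no internal proof to compare against; your proposal reconstructs the argument the paper delegates to the literature. Its skeleton is the standard one and is sound: condition~1 of Definition~\ref{def:equivalence_nmrdp} matches the start states, the uniqueness clause of condition~2 makes the lift of a feasible $\mathcal{N}$-trajectory to an $M'$-trajectory deterministic and transition-probability-preserving, condition~3 makes it reward-preserving, and value equality follows by summing discounted returns over corresponding trajectories. One secondary point you assert rather than prove is that the $M'$-trajectory distribution is exactly the pushforward under $\Phi$: condition~2 only maps $\mathcal{N}$-transitions \emph{into} $M'$, so to rule out leftover probability mass on the $M'$ side you should observe that the lifted transitions out of each reachable extended state already carry total probability $1$, hence every unmatched $M'$-transition has probability $0$.

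The genuine soft spot is the step ``the hypothesis that $\pi$ and $\pi'$ are equivalent ensures the action-probability factors agree at corresponding decision points.'' The lemma defines equivalence by ``for every feasible trajectory, they induce the same sequence of rewards,'' but by condition~3 the reward sequence is a function of the trajectory alone, independent of which policy generated it; read literally, this would make \emph{every} pair of policies equivalent, and the conclusion would then be false (two policies choosing different actions under an action-dependent Markovian reward have different values). What your proof actually needs --- and what the source definition in \cite{brafman2018ltlf,de2020temporal} supplies --- is that equivalent policies select the same actions (the same action distributions) on corresponding histories and extended states; from that hypothesis your bijection argument correctly yields identical trajectory distributions, identical reward processes, and hence identical values. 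You implicitly adopt this stronger reading without saying so. State it explicitly as the working definition of policy equivalence: it is exactly the ingredient that turns your trajectory correspondence into equality of expectations, and with it your proof is correct.
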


\begin{proof}  
Consider the NMRDP $\mathcal{N} = (\mathcal{S}, \mathcal{A}, \Pr, \gamma, \mathcal{R})$ where $\mathcal{R} = \{(\varphi_i,\rho_i)\}_{i=1}^n$. Let $\mathcal{A}$ be the composite quantitative reward monitor obtained from these specification-reward pairs, and let $\ExtendedMDP = \mathcal{M} \otimes \mathcal{A}$ be the synchronous product MDP of Definition~\ref{def:composite_reward_monitor}. Recall that the state space of $\ExtendedMDP$ is $\mathcal{S}' = Q \times \mathcal{S}$, where $Q$ is the monitor state space, and that its transition kernel and reward function are Markovian on $\mathcal{S}'$.

We show that $\mathcal{N}$ and $\ExtendedMDP$ are equivalent in the sense of Definition~\ref{def:equivalence_nmrdp}. Define $\sigma(s) =  (q_0, s)$ and $t(q,s) = s$. The three conditions hold by construction of the product: the $\mathcal{S}$ component evolves according to $\Pr$, while the monitor state $q$ is updated deterministically from the labelled transition, and the reward in $\ExtendedMDP$ coincides with the monitor output, hence with $\mathcal{R}$ along corresponding trajectories.

Now let $\pi$ be any policy on $\ExtendedMDP$, and define a policy $\pi'$ on $\mathcal{N}$ that, on a history ending in $s_t$, selects the same action as $\pi$ in the corresponding extended state $(q_t,s_t)$ obtained by progressing the monitor along that history. By Lemma~\ref{lemma:markovian_equivalence}, $\pi$ and $\pi'$ are equivalent policies and thus have the same value for every initial state. In particular, if $\pi$ is optimal on $\ExtendedMDP$ (with respect to the non-Markovian objectives encoded by the monitor), then $\pi'$ is optimal for the original NMRDP. Hence, any optimal policy for the non-Markovian objectives can be realised by a Markovian policy on the extended MDP $\ExtendedMDP$, which proves Theorem~\ref{theorem:markovian_policy}.
\end{proof}

\section{Environment Specification Details}
\label{sec:appendix_env_spec_details}

This section outlines the domain-specific quantitative fluents defined, per environment evaluated in our paper. Per Table~\ref{table:cw}, \textsc{Classic} and \textsc{Box} environments exhibit greater performance compared to \textsc{Gridworlds} and \textsc{Toy}, since the latter two comprise small discrete grids where quantitative pathfinding signals often favour regions that are ultimately unproductive (for instance, high reward-density areas cluttered with obstacles). The effectiveness of quantitative reward monitoring relies on being able to define strong quantitative progress measures for each environment.

In all experiments, we used an exponentially moving average convergence test (see Definition~\ref{def:reward_convergence}) with span $N = 32$ episodes, forming checkpoints every $N$ episodes and declaring convergence once the last $P = 5$ checkpoint intervals showed less than a 1 relative improvement in the EMA of returns. The relative tolerance was automatically adapted to the recent noise in the checkpoint sequence, clipped to the range $[\rho_{\min}, \rho_{\max}] = [0.002, 0.02]$, and we did not enforce a minimum EMA level or exclude negative returns.

\subsection{Classic Environments}
\paragraph*{Acrobot} models a two-link under-actuated pendulum—only the elbow joint is powered—tasking an agent with swinging the lower link up to a target height, making it a benchmark for control under limited actuation. There are six observations: $[\cos(\theta_1), \sin(\theta_1), \cos(\theta_2), \sin(\theta_2), 
\dot{\theta_1},
\dot{\theta_2}
]$. 

This environment lacks any coordinates of the rotational joints directly, only specifying angles of two links with uniform length of 1. However, it is still possible to calculate the x and y values of the tips of links due to having all necessary angles of the links.
For calculating the tip height (y), we can use the cosine addition formula: $\cos(\theta_1 + \theta_2) = \cos\theta_1 \cos\theta_2 - \sin\theta_1 \sin\theta_2$, and then define tip height as: $h_{\text{tip}} = -\cos\theta_1 - \cos(\theta_1 + \theta_2)$. To normalise between $[0,1]$, we can do $reach\_goal = y_{\text{clamped}} = \min\{\max\{ h_{\text{tip}}, 0 \}, goal \}$, where the goal height is typically 1 as that terminates the episode successfully.

We can also provide a reward for reaching a high velocity (which is required for swinging upwards), such that:

$$
velocity =
\begin{cases}
0, & \text{if } \dot{y}_{\text{tip}} \leq 0, \\[6pt]
\min\!\left(\dfrac{\dot{y}_{\text{tip}}}{v_{\text{max}}},\,1\right), & \text{if } \dot{y}_{\text{tip}} > 0.
\end{cases}
$$
where $v_{max}$ can be assumed to be 2.0.

For the Boolean monitor, we can construct from the following specification: $[(\Eventually(reach\_goal), 10), (\Always \true, -1)]
$, the former providing a reward for reaching the goal tip height and the latter providing a standard time penalty to entice a faster solution.

For the quantitative monitor, we use the following specification: $[\bigl(\Eventually(reach\_goal), 50\bigr), \bigl( \Eventually(velocity), 10\bigr), \\ \bigl(\eveAlwTrue, -1 \bigr)]$.

\paragraph*{Cartpole} requires balancing a pole in a cart, where this pole is attached by an un-actuated joint moving on a frictionless track, where the cart can be moved in either the left or right directions. By default, the reward function provided by \textsc{Gymnasium} does not consider reaching an optimal position. Rather, the base reward function only provides a reward of 1 at each time step, which optimises towards remaining balanced since termination occurs when unbalanced. We enhance the base reward function to also optimise towards reaching a goal position.

The positional observation of \textsc{Cartpole} is only defined in terms of the x-axis. Therefore, we can define the distance between the cart ($x$) and the goal position ($g$) as $d = |x-g|$. We can define the proximity term $p_t$ as follows:

$$
p_t \;=\;
\begin{cases}
1, & d \le \tau,\\[6pt]
1 - \dfrac{d - \tau}{X - \tau}, & \tau < d < X,\\[10pt]
0, & d \ge X.
\end{cases}
$$

$\tau$ is a tolerance and $X$ is a threshold on the track's boundaries (defined as 2.4). Therefore, the reward per timestep is $r = 1 + p_t$.

The relevant reward-specification pairs for \textsc{Cartpole} can be defined as the following for both the Boolean and quantitative settings: $[(\Eventually(\Always(reach\_goal)), 2), \: (\Always(balanced), 4)]$, with the former expressing a persistence property and the latter a safety property. For the first specification, we want to ensure that we always remain in a goal position (i.e., do not slip backwards, here we can assume $goal\_pos = 2$). The second specification requires always remaining balanced, which is useful regardless of how far the agent is from the goal position. 

  The quantitative atoms can be defined as:
  \begin{eqnarray*}
      \textit{balanced} & = &
      \begin{cases} 
      \frac{0.209 - |angle|}{0.209}, & \text{if } |angle| \leq 0.209 \\ 
      0, & \text{else}
  \end{cases}\\
  %
      \textit{reach\_goal} & = &
      \begin{cases} 
      \frac{current\_pos}{goal\_pos}, & \text{if } current\_pos > 0.01 \\ 
      0, & \text{else}
      \end{cases}
  \end{eqnarray*}
  
In the case of Boolean atoms, $balanced$ returns $\true$ iff $angle \leq 0.209$, else $\false$. As for $reach\_goal$, it is true if the agent is within $\leq 0.1$ of the $goal\_pos$. The task completion measure (performance function) is specified as an equally weighted mean of $reach\_goal$ and $balanced$.

\paragraph*{Mountain car} challenges an agent to reach the goal flag atop a steep hill in the shortest time possible. Because the engine is under-powered, the agent must first move away from the goal to the opposite slope, using the descent to build momentum before driving up the final ascent.

The observation space is defined with $\texttt{Box}(-1.2, 0.6)$ which corresponds to the $x$ coordinate of the car's current position. 
Notably, the $goal\_pos$ would typically be set at 0.5 as this causes successful episode termination. The observation space covers a grid using a negative coordinate system. Therefore, the $reach\_goal$ fluent can be defined in the following manner:
$$
\begin{cases}
0, & \text{if } x \le x_{\min}, \\[8pt]
\min\!\Biggl\{ \max\!\Biggl\{ \dfrac{x - x_{\min}}{x_{\text{milestone}} - x_{\min}},\, 0 \Biggr\},\, 1 \Biggr\}, & \text{if } x > x_{\min}.
\end{cases}
$$ 
where $x_{milestone}$ is the goal position that can be set, and $x_{min}$ is the smallest reasonable $x$ value before outputting $0$, however, using $-1.2$ (the environment's minimum value) is useful enough.

The second parameter in the observation space is the velocity, which is useful as a heuristic. Before reaching the goal, a small velocity is not helpful, and when near the goal, coming to a rest is ideal at any arbitrarily set goal position (as opposed to overshooting). The max velocity $v_{max} = 0.07$ allows defining the $velocity$ fluent:

$$
\begin{cases}
\max\Bigl\{ 0,\; 1 - \dfrac{|v|}{v_{\max}} \Bigr\}, & \text{if } x \ge g, \\[1em]
0, & \text{if } x < g \text{ and } v \le 0, \\[1em]
\min\Bigl\{ \dfrac{v}{v_{\max}},\; 1 \Bigr\}, & \text{if } x < g \text{ and } v > 0.
\end{cases}
$$

Therefore, the quantitative specification is: $[\bigl( \Eventually(reach\_goal), 50 \bigr), \bigl( \Eventually(velocity), 25 \bigr)]$, and the Boolean specification is simply $[\bigl( \Eventually(reach\_goal), 50 \bigr)]$, as transforming the $velocity$ fluent into a crisp Boolean value is non-trivial. For the task completion evaluation metric, the $reach\_goal$ fluent suffices, since $velocity$ is an intrinsic reward acting as a heuristic.

\paragraph*{Pendulum}  involves a problem that originates from control theory, with the idea to swing a pendulum upright, maintaining its centre of gravity above the fixed point, from any starting angle and angular velocity.  The reward function specified by \gym is already informative, $r= -(\theta^2 + 0.1 * \theta_{dt^2} + 0.001 * torque^2)$, where $\theta$ is the pendulum's angle normalised between $[-\pi, \pi]$. We opt to use the following quantitative and Boolean specification:
$[\bigl(\Eventually(upright), 25\bigr), \bigl(\Eventually(\Always(upright \land stabilised)), 10 \bigr), \bigl(\eveAlwTrue, -1\bigr)]$.

For the Boolean case, from observations we can extract $\cos(\theta_t),\, \sin(\theta_t),\, \text{and} \: \dot{\theta}_t$, where $\dot{\theta}_t$ represents the torque. We can state $stabilised \iff \dot{\theta}_t =1$, and $upright  \iff \cos(\theta_t) = 1$. Quantitatively, being upright can be defined as: $cos\_upright\_score(\cos_t, \sin_t, \dot{\theta}_t)
\;=\;
\frac{\cos_t + 1}{2}$. And stabilised can be defined as: $stabilised(\cos_t, \sin_t, \dot{\theta}_t)
=
\max\bigl(
  \min(\,1 \;-\; \tfrac{|\dot{\theta}_t|}{\text{max\_speed}},\; 1),\;0
\bigr)$. The task evaluation metric allocates an equal weighting of uprightness and stability, i.e. $0.5 \cdot cos\_upright\_score(\cos_t, \sin_t, \dot{\theta}_t) + 0.5 \cdot stabilised(\cos_t, \sin_t, \dot{\theta}_t)$.

\subsection{Toy Environments}
\paragraph*{Cliff walking} requires reaching a goal position without falling into a cliff tile (which forces episode termination), with holes distributed around the bottom of the map and the goal position being in the bottom right.

For both the Boolean and quantitative monitors, we consider the following set of reward-specification pairs $[\bigl(\Eventually(\Always(reach\_goal)), 25 \bigr), \bigl(\Eventually(\Always(reach\_cliff)), -25 \bigr), \\ \bigl(\eveAlwTrue \land \lnot reach\_goal, -1\bigr)]$, which penalizes falling from the cliff and encourages reaching the goal position. We also add a time penalty to encourage minimal time by distributing -1 for each time step.

The observation is encoded as integer where $obs = current\_row * ncols + current\_col$ where both row and col are 0-indexed. The x position can be extracted by $obs \bmod 12$, and the y position can be extracted by the floor division of $\lfloor \frac{\text{obs}}{12} \rfloor$, as there are 12 numbers of columns. 

A quantitative reward monitor is able to exploit the Manhattan distance or \emph{breadth-first search} (BFS) distance, the latter calculating the distance to a goal position with obstacles that need to be traversed accounted for. However, in practice, this does not lead to the quantitative monitor outperforming the Boolean monitor, likely for the following reasons:
\begin{enumerate}
  \item The hole tiles are surrounded by a high density of rewards, encouraging the agent to explore a non-fruitful area.
  \item Combined with the above factor, the grid is already very small, so finding a path to follow is not difficult.
\end{enumerate}

Therefore, both the quantitative monitor and Boolean monitor use Boolean atoms, and should have near-identical performance. The task completion evaluation function is based on the $reach\_goal$, which is the clipped BFS distance scaled in $[0, 1]$, or $0$ if in a failure state (reached a hole tile).

\paragraph*{Frozen lake} is similar to cliff walking, however, the obstacles (lake tiles) are distributed more sparsely throughout the grid. We opted to use \texttt{is\_slippery = False} as the reward behaviour is not changed by having a slippery environment. Similar to the Cliff Walking environment, for both the Boolean and quantitative we consider the set of reward-specification pairs as $[\bigl(\Eventually(reach\_goal), 10 \bigr), \bigl(\Always(\lnot reach\_hole), -10 \bigr), \\ \bigl(\eveAlwTrue, -1\bigr)]$, which penalizes falling into the lake and encourages reaching the goal position in the shortest possible time. We also use the Boolean atoms for the quantitative monitor in this environment.

Since we operate using a 4x4 grid, we again use the BFS distance due to the presence of obstacles (instead of e.g. the negative Manhattan distance), where the maximum BFS distance of the grid is 6 (which is used to scale the BFS distance between $[0, 1]$). The distance is used as the task completion metric.

\paragraph*{Taxi} requires picking up a passenger from a given location and dropping them off at the goal position, avoiding obstacles in the route.

For the Boolean monitor, we use the following specification: $\bigl[
  \bigl(\Eventually(\textit{reach\_goal})\,,\; 100 \bigr),\;
  \bigl(\Eventually(\textit{at\_passenger})\,,\; 30 \bigr),\; \\
  \bigl(\Always\!\bigl(\Eventually(\textit{hit\_wall})\bigr)\,,\; -50 \bigr),\; \\
  \bigl(\Always(\textit{act\_drop\_off} \land \lnot \textit{has\_passenger})\,,\; -50 \bigr),\; \\
  \bigl(\Always(\textit{act\_drop\_off} \land \lnot \textit{at\_destination})\,,\; -25 \bigr),\; \\
  \bigl(\Always(\textit{act\_pick\_up} \land \lnot \textit{at\_passenger})\,,\; -25 \bigr),\;
  \bigl(\eveAlwTrue,\; -1 \bigr)\bigr]
$
Whereas for the quantitative monitor, we use the following specification which involves slight variations of the above,  $\bigl[
  \bigl(\Eventually\Always(\textit{reach\_goal})\,,\; 100 \bigr),\;
  \bigl(\Eventually(\textit{at\_passenger})\,,\; 30 \bigr),\; \\
  \bigl(\Always\!\bigl(\Eventually(\textit{hit\_wall})\bigr)\,,\; -50 \bigr),\; \\
  \bigl(\Eventually\Always(\textit{act\_drop\_off} \land \lnot \textit{has\_passenger})\,,\; -50 \bigr),\; \\
  \bigl(\Eventually\Always(\textit{act\_drop\_off} \land \lnot \textit{at\_destination})\,,\; -25 \bigr),\; \\
  \bigl(\Eventually\Always(\textit{act\_pick\_up} \land \lnot \textit{at\_passenger})\,,\; -25 \bigr),\; \\
  \bigl(\eveAlwTrue,\; -1 \bigr)\bigr]
$

For the quantitative monitor, we again opt to use Boolean atoms as with the other \textsc{Toy} environments. Also note that, the atom $hit\_wall$ and action atoms $act\_...$ can be viewed as a limited form of non-Markovian information encoded into the state $s'$ of a transition $(s, a, s')$. Specifically, $hit\_wall$ checks if $s' = s$ and no other action was performed (like a drop-off or pick-up).

In the quantitative setting, we are able to exploit the distance to the passenger ($reach\_passenger$) and the distance to the drop-off point ($reach\_destination)$ in a quantitative fashion. The BFS distance can be used here to account for obstacles in the route, but note that walls are not given explicit coordinate indexes. Rather checks for wall collisions involves specific pairs of coordinates being traversed.

For the task evaluation metric, it is calculated in two halves: 0.5 is assigned once the passenger is picked up; or the BFS distance to the passenger otherwise. Then, following the passenger being picked up, the final 0.5 (for a total of up to 1.0) is assigned once the passenger is dropped-off; or the BFS distance to the drop-off location otherwise.

\subsection{Box2D Environments} 

\paragraph*{Bipedal walker} is an environment where a 4-jointed robot must be controlled to reach the end position of the map, avoiding falling over.  There are 24 observations, however, the robot's position coordinates are not exposed and not easily calculable.

There are two main fluents that we look at. The first is $upright$, extracted from the hull angle. Normalised, it takes the following form $upright = \max\!\Bigl\{ \min\!\Bigl\{ 1 - \frac{|\theta|}{\alpha},\, 1 \Bigr\},\, 0 \Bigr\}$, where $\alpha$ is the maximum acceptable angle for being upright, defined as $0.4$ in this case.

The second atom is $smooth$, defined as $S(v_h) = \max\Bigl\{\min\Bigl\{ s, \,1 \Bigr\},\, 0\Bigr\} = \max\Bigl\{\min\Bigl\{ 1 - \frac{\left| v_h - 0.12 \right|}{0.1},\, 1 \Bigr\},\, 0\Bigr\}$, which checks for horizontal velocity close to 0.12, which translates to approximately 1.0 m/s.

The Boolean fluents use the quantitative calculations and return true only if $value >= 0.99$. The specification-reward pair for both the Boolean and quantitative setting can be simply defined as: $[\bigl(\Always(upright), 10 \bigr), \bigl( \Always(smooth), 10 \bigr)]$.

There are other interesting fluents that could be defined in this non-Markovian setting, such as checking for alternation of legs making contact with the ground.

The task completion evaluation metric is defined as $smoothness \ * 0.5 + upright * 0.5$.


\paragraph*{Lunar lander} aims to land a rocket ship, where a perfect landing requires that both legs make contact with the ground, in an upright position, within a defined area of the map between two legs. 

A quantitative reward can be defined based on Euclidean distance to the landing pad, and making a landing that is successful, whereby both legs make contact with the ground and the landing is 'soft'.

Landing closeness is based on Euclidean distance to the landing pad's center, which is at Cartesian coordinates (0, 0). Let $d_{\max}$ be the maximum distance where the score is 0 (for normalization purposes), then let $d$ be $d = \sqrt{(x - x_c)^2 + (y - y_c)^2}$, allowing $landing\_closeness = \max\Bigl\{ 0, \, 1 - \frac{d}{d_{\max}} \Bigr\}$. For the $legs\_contact$ fluent, this is provided as part of the observation space, in the quantitative setting, one leg on ground results in 0.5, both as 1.0, and neither leg scores 0. In the Boolean setting, both legs must be on ground to be true and false otherwise. $outside\_viewport$ is defined as y being greater than 1.

The fluent $upright$ is assigned as $\max\Bigl\{\min\Bigl\{ 1 - \frac{|\theta|}{\alpha}, \,1 \Bigr\},\, 0\Bigr\}$, where $\alpha$ is the $max\_angle$, defined for normalization purposes. Let $\omega$ denote the angular velocity (extracted from the observation vector). Given a maximum angular velocity $\omega_{\max} = 2.0$, the angular velocity score is defined as
$s = 1 - \frac{|\omega|}{\omega_{\max}}$. The final angular velocity score, clamped to the interval $[0,1]$, is then given by $ S(\omega) = \max\Bigl\{ \min\Bigl\{ 1 - \frac{|\omega|}{\omega_{\max}},\, 1 \Bigr\},\, 0 \Bigr\}$. The angular velocity score rewards a low angular velocity specifically for landing score measurement.

Now we can define $soft\_landing$, which only provides an active value near landing time to measure how smooth the landing is. 
$$
L = \text{landing\_closeness} \quad
U = \text{upright}.
$$
$$
A = \text{angular\_velocity\_score},  \quad
C = \text{legs\_contact\_score}. 
$$

Smoothness $S$ is defined as:
$$S \eqdef
\begin{cases}
0.33\, L + 0.33\, U + 0.33\, A, & \text{if } L(\mathbf{obs}) \geq 0.8, \\[1em]
0, & \text{otherwise.}
\end{cases}
$$

The progress measure can be defined as:
$M = 0.4 * L + 0.2 * C + 0.2 + U * 0.2 + A * 0.2$.

Thus, we can define the Boolean specification as: $[\bigl( \Eventually(\Always(reach\_landing\_zone)), 50 \bigr), \bigl(\Eventually(legs\_contact), 20\bigr), \\ \bigl(G(outside\_viewport), -20\bigr), \bigl(\Always(\true), -1\bigr)]$. And the quantitative specification as: $[\bigl( \Eventually(\Always(reach\_landing\_zone)), 50 \bigr), \\ \bigl(\Eventually(soft\_landing), 50 \bigr), \\ \bigl( \Eventually(\Always(outside\_viewport)), -20\bigr), \bigl(\eveAlwTrue, -1\bigr)]$.

\subsection{Safety Gridworlds Environments}

\paragraph*{Island navigation} is a simple 8x6 gridworld environment where the aim is to reach a goal tile while avoiding any water tiles (which force episode termination). The intent of the environment is \emph{safe exploration}, meaning that any unsafe behaviour is intended to be avoided even during the learning process. The agent already has as an observation the minimum Manhattan distance to any water tile from its current position. In this particular environment, moving away from the goal coincides with reaching the water, which simplifies the specifications required.

The fluent $in\_water$ is true when the agent is in a water tile, likewise $at\_goal$ is true when the agent reaches the goal tile. In this environment, the quantitative monitor also shares the Boolean atoms.

The Boolean reward-specification pairs are $[ \bigl(\Always(\lnot in\_water), 100\bigr), \bigl(\Eventually(at\_goal), 50 \bigr), \bigl(\Always(\top), -1\bigr)]$. The quantitative monitor's reward-specification pairs are defined symmetrically. Intuitively, the task completion metric is defined as either the normalised Manhattan distance (scaled between $[0, 1]$ given the maximum Manhattan distance for an 8x6 grid is 12) to the goal tile or 0 upon entering a water drape.

\paragraph*{Sokoban} is an environment designed with the intent to evaluate avoiding side effects, requiring balancing reaching the goal tile with a penalty for moving the box into an irreversible position. 

On one hand, we must reach the goal, which can be achieved with the scaled BFS distance for the 6x6 grid. However, the agent should move to the goal position whilst causing minimal irreversible side effects, despite the fact that the box is an obstacle in a way that must be moved into a non-ideal position.

Therefore, we consider a fluent called the $wall\_penalty$, with the set of walls denoted as $\mathcal{W}$. First, let us define the four von-Neumann neighbours (four orthogonally adjacent cells) of the box's position $(x, y)$, $\: \mathcal{N}(x,y)=\{(x+1,y),\,(x-1,y),\,(x,y+1),\,(x,y-1)\}$. For each neighbour $p \in \mathcal{N}(x,y)$ we can determine whether it is a wall or not by $\delta(p)$:
$$
\delta(p)=
\begin{cases}
1,& p\in \mathcal{W},\\[4pt]
0,& p\notin \mathcal{W}.
\end{cases}
$$

A corner can be detected by two walls forming an orthogonal pair when one is horizontal and the other vertical: $\mathrm{corner}(x,y) =
\bigl[\delta(x-1,y)\land\delta(x,y-1)\bigr]\;\lor\;
\bigl[\delta(x-1,y)\land\delta(x,y+1)\bigr]\;\lor\;
\bigl[\delta(x+1,y)\land\delta(x,y-1)\bigr]\;\lor\;
\bigl[\delta(x+1,y)\land\delta(x,y+1)\bigr].$ Then, we can set the $wall\_penalty$ fluent as: $$ \operatorname{wall\_penalty}(x,y) = \begin{cases}
1, & \text{if }\mathrm{corner}(x,y)=\text{true},\\[6pt]
\dfrac12, & \text{if }\displaystyle\sum_{p\in\mathcal{N}(x,y)}\delta(p)=1,\\[10pt]
0, & \text{otherwise}.
\end{cases}
$$
In the Boolean form, the vase touching a single wall or being in a corner is treated equally as setting the $wall\_penalty$ to $\false$.

The quantitative reward specification can be defined as: $[ \bigl(\Eventually(\Always(reach\_goal)), 100\bigr), \bigl( \Always(\lnot wall\_penalty), 100\bigr), \\ \bigl(\eveAlwTrue, -1\bigr)]$. And the Boolean specification can be defined as: $[\bigl(\Eventually(reach\_goal), 100\bigr), \bigl(\Always(\lnot wall\_penalty), 100\bigr), \\ \bigl(\Always(\true), -1\bigr)]$.

\paragraph*{Conveyor Belt} is the task of rescuing a vase from breaking. The vase is pushable by the agent with Sokoban-style dynamics, and is at risk of breaking if it falls off a moving conveyor belt which it is initially placed on. The optimal behaviour is to secure the vase and not re-place it on the conveyor belt where it is at risk of breaking. 

We define the following fluents $vase\_broken$ (which occurs when the vase reaches a specific tile, falling off the conveyor belt), $vase\_off\_conveyor$, and $reach\_vase$ which gives the negative Manhattan distance, clamped between 0 to 1 using the maximum Manhattan distance of 12 for the 7x7 grid, defined as following:

$$
\begin{aligned}
reach_{\text{vase}}(x, y, x_{\text{vase}}, y_{\text{vase}}) = \\
&\hspace{-2cm}\max\!\Bigl(
     0,\,
     1 - \frac{|x - x_{\text{vase}}| + |y - y_{\text{vase}}|}
              {2\,(7 - 1)}
  \Bigr)
\end{aligned}
$$

The Boolean reward specification is given by: $[\bigl(\Eventually(vase\_off\_conveyor) \land \Always(\lnot vase\_broken), 100\bigr), \\ \bigl(\Always(\lnot vase\_broken), 100\bigr)]$. And the quantitative reward specification is extended with: $[\bigl(\Eventually(\Always(vase\_off\_conveyor)) \land \Eventually(\Always(reach\_vase)), 100 \bigr), \bigl((\Eventually(\Always(vase\_off\_conveyor)) \lor \Eventually(\Always(reach\_vase))) \land \Always(\lnot vase\_broken), 100 \bigr), \\ \bigl(\Always(\lnot vase\_broken), 100 \bigr)]$.

Conveyor Belt specifically is a challenging environment to learn as within the span of a few (4-5) timesteps, if the vase is not rescued from the conveyor, a large positive segment of the reward can never be attained. Moreover, the episode does not actually terminate when the vase is broken unlike others, as the original intent of the environment was to evaluate negative side-effects, which increases the learning difficulty.

\section{Algorithms}
\label{sec:appendix_hyperparams}
All experiments were run using Python 3.11 running CUDA 12.8 on Ubuntu Linux 24.04.2 LTS with 256 GiB of RAM, Intel Xeon Processor with 192 cores at 2.4 GHz clock speed, and a Nvidia Tesla A100 (80GB vRAM). The hyperparameters used for the algorithms are listed as defaults to efficiently allow evaluating all reward mechanisms under identical conditions, with specific hyperparameters and number of trials detailed in the following subsections. 

\subsection{Number of Trials}
Each environment was run $n$ times with $m$ number of episodes (in the case of Tabular Q-Learning) or training steps (in the case of PPO). This helps to reduce the impact of any non-deterministic behaviour (e.g. random exploration during training). For \textsc{Toy} and \textsc{Safety-Gridworlds}, we used a single (environmental) labelled seed to control the object layout of the environment, and random seeds based on number of runs for other environments. In Table~\ref{table:cw}, a policy is said to have converged to a suboptimal policy (denoted by $*$) when there is another policy trained that achieves a higher value for the metric of task completion which we use as our performance function.

\begin{table}[hbtp]
    \centering
    \caption{Number of trials used with environments trained via PPO}
    \label{tab:ppo_params}
    \resizebox{\linewidth}{!}{
    \begin{tabular}{@{}lrrrr@{}}
        \toprule
        \textbf{Environment} & \#\textbf{Runs} & \textbf{Steps / Update} & \#\textbf{Parallel Envs} & \textbf{Total Timesteps} \\ \midrule
        Acrobot         & 10 & 128  & 4 & 500\,000 \\
        Cartpole        & 10 & 128  & 4 & 500\,000 \\
        Mountain Car    & 10 & 128  & 4 & 500\,000 \\
        Pendulum        & 10 & 2048 & 1 & 1\,000\,000 \\
        Bipedal Walker  & 10 & 2048 & 1 & 500\,000 \\
        Lunar Lander    & 10 & 128  & 4 & 500\,000 \\ \bottomrule
    \end{tabular}
    }
\end{table}

\begin{table}[hbtp]
    \centering
    \caption{Number of trials used with environments trained via Tabular Q-Learning}
    \label{tab:q_learning_params}
    \resizebox{\linewidth}{!}{
    \begin{tabular}{@{}lrrr@{}}
        \toprule
        \textbf{Environment} & \#\textbf{Episodes} & \#\textbf{Runs} & \textbf{Max Steps / Episode} \\ \midrule
        Conveyor Belt      & 2\,000 & 500 & 100 \\
        Island Navigation  & 2\,000 & 500 & 100 \\
        Sokoban           & 2\,000  & 500 & 100 \\
        Cliff Walking      & 2\,000 & 500 & 100 \\
        Frozen Lake        & 2\,000 & 500 & 100 \\
        Taxi              & 2\,000  & 500 & 100 \\ \bottomrule
    \end{tabular}
    }
\end{table}

\subsection{Hyperparameters of Tabular Q-Learning}
For tabular Q-Learning, in the \textsc{toy} and \textsc{safety-gridworld} environments, we used the following hyperparameters:
\begin{table}[hbtp]
  \centering
  \caption{Hyper-parameters for $\epsilon$-greedy Q-Learning training}
  \label{tab:dqn-hparams}
  \begin{tabular}{ll}
    \toprule
    \textbf{Parameter} & \textbf{Value} \\
    \midrule
    Learning rate                   & 0.01 \\
    Initial exploration rate $\epsilon$ & 1.0 \\
    $\epsilon$ decay                & 0.9985 \\
    Minimum $\epsilon$              & 0.05 \\
    Discount factor $\gamma$        & 0.9 \\
    \bottomrule
  \end{tabular}
\end{table}

\subsection{Hyperparameters of Proximal Policy Optimization}
To handle continuous action and state spaces (as part of \textsc{Box2d} and \textsc{Classic} environments), alongside Tabular Q-Learning we also used PPO which also exhibits state-of-the-art performance within many environments. The specific implementations used were adapted from \textsc{CleanRL}.

\begin{table}[htbp]
  \centering
  \caption{Hyper-parameters for PPO in discrete-action environments}
  \label{tab:ppo-discrete-hparams}
  \begin{tabular}{ll}
    \toprule
    \textbf{Parameter} & \textbf{Value} \\
    \midrule
    Steps per rollout $n_{\text{steps}}$  & 128 \\
    Torch deterministic              & True \\
    CUDA enabled                     & True \\
    Convergence window size          & 100 \\
    Convergence threshold            & 0.1 \\
    Total timesteps                  & 500\,000 \\
    Learning rate                    & $2.5\times10^{-4}$ \\
    Number of parallel envs          & 4 \\
    LR annealing                     & True \\
    Discount factor $\gamma$         & 0.99 \\
    GAE $\lambda$                    & 0.95 \\
    Minibatches per update           & 4 \\
    Epochs per update                & 4 \\
    Normalise advantages             & True \\
    Clip coefficient                 & 0.2 \\
    Clip value-loss                  & True \\
    Entropy coefficient              & 0.01 \\
    Value-loss coefficient           & 0.5 \\
    Max.\ gradient norm              & 0.5 \\
    Target KL                        & None \\
    \bottomrule
  \end{tabular}
\end{table}

\begin{table}[htbp]
  \centering
  \caption{Hyper-parameters for PPO in continuous-action environments}
  \label{tab:ppo-continuous-hparams}
  \begin{tabular}{ll}
    \toprule
    \textbf{Parameter} & \textbf{Value} \\
    \midrule
    Steps per rollout $n_{\text{steps}}$  & 2\,048 \\
    Torch deterministic              & True \\
    CUDA enabled                     & True \\
    Convergence window size          & 100 \\
    Convergence threshold            & 0.1 \\
    Total timesteps                  & 1\,000\,000 \\
    Learning rate                    & $3\times10^{-4}$ \\
    Number of parallel envs          & 1 \\
    LR annealing                     & True \\
    Discount factor $\gamma$         & 0.99 \\
    GAE $\lambda$                    & 0.95 \\
    Minibatches per update           & 32 \\
    Epochs per update                & 10 \\
    Normalise advantages             & True \\
    Clip coefficient                 & 0.2 \\
    Clip value-loss                  & True \\
    Entropy coefficient              & 0.0 \\
    Value-loss coefficient           & 0.5 \\
    Max.\ gradient norm              & 0.5 \\
    Target KL                        & None \\
    \bottomrule
  \end{tabular}
\end{table}
\fi

\end{document}